\def\DiMOSArxiv{1}
\documentclass{article}

\PassOptionsToPackage{table}{xcolor}
\usepackage[T1]{fontenc}
\ifdefined\DiMOSArxiv
  \usepackage{arxiv,times}
  \iclrfinalcopy
\else
  \usepackage{iclr2027_conference,times}
\fi

\usepackage{dsfont} % for \mathbb{1} as \mathds{1}

\usepackage{amsmath,amsfonts,bm}

\def\1{\bm{1}}

\DeclareMathAlphabet{\mathsfit}{\encodingdefault}{\sfdefault}{m}{sl}
\SetMathAlphabet{\mathsfit}{bold}{\encodingdefault}{\sfdefault}{bx}{n}

\usepackage{amsmath}
\usepackage{amssymb}
\usepackage{amsfonts}
\usepackage{amsthm}
\usepackage{booktabs}
\usepackage{multirow}
\usepackage{makecell}
\usepackage{graphicx}
\usepackage{subcaption}
\usepackage{wrapfig}
\usepackage{algorithm}
\usepackage{algpseudocode}
\usepackage[table]{xcolor}
\usepackage{xspace}
\usepackage{pifont}
\usepackage{enumitem}
\usepackage{microtype}
\usepackage{url}
\usepackage{hyperref}
\usepackage{cleveref}

\newtheorem{theorem}{Theorem}

\newtheorem{proposition}{Proposition}

\newtheorem{definition}{Definition}

\definecolor{dimosAccent}{HTML}{9B2C4E}
\newcommand{\methodname}{\textcolor{dimosAccent}{\textsc{DiMOS}}\xspace}

\definecolor{dimosTask}{HTML}{DDEBF7}
\definecolor{dimosOurs}{HTML}{F8E1E1}
\definecolor{dimosNavy}{HTML}{24527A}
\definecolor{dimosStd}{HTML}{707070}
\definecolor{dimosRule}{HTML}{58606A}
\newcolumntype{I}{!{\color{black}\vrule width 0.4pt}}
\newcolumntype{J}{!{\color{black}\vrule width 0.8pt}}
\newcommand{\dimosOuterRule}{\Xhline{1.2pt}}
\newcommand{\dimosMajorRule}{\Xhline{1pt}}
\newcommand{\dimosSectionRule}{\Xhline{0.6pt}}
\newcommand{\dimosMinorRule}{\Xhline{0.4pt}}

\newcommand{\meanstd}[2]{#1{\scriptsize\color{dimosStd}$\pm$#2}}
\newcommand{\firstresult}[1]{\textbf{#1}}
\newcommand{\secondresult}[1]{\underline{#1}}
\newcommand{\baseline}[1]{\texttt{#1}}

\title{\methodname: Doob-Guided Inference-Time Multi-Objective Search for Scientific Design}
\ifdefined\DiMOSArxiv
  \author{%
    Ziqing Wang\thanks{Equal contribution.}\quad
    Qijie Zhu\footnotemark[1]\quad Weimin Wu\footnotemark[1]\quad Zeqi Ye\\
    \textbf{Minshuo Chen\quad Han Liu\thanks{Corresponding authors.}\quad
    Kaize Ding\footnotemark[2]}\\[4pt]
    \normalfont Northwestern University
  }
  \hypersetup{
    pdftitle={DiMOS: Doob-Guided Inference-Time Multi-Objective Search for Scientific Design},
    pdfauthor={Ziqing Wang, Qijie Zhu, Weimin Wu, Zeqi Ye, Minshuo Chen, Han Liu, Kaize Ding}
  }
\else
  \author{Anonymous authors}
  \urldef\codeurl\url{https://anonymous.4open.science/r/DiMOS/}
\fi

\begin{document}

\maketitle
\suppressfloats[t]

\begin{abstract}
Scientific design often requires jointly satisfying multiple objectives and
constraints. Pretrained masked diffusion models provide a generative
foundation for this task, but fine-tuning them to meet these objectives and
constraints incurs additional training costs, motivating inference-time
guidance with frozen models. However, such guidance faces two challenges:
pass-or-fail constraints and black-box reward models may provide no useful
gradients, while jointly satisfying multiple requirements can leave a small
feasible region, making feasible designs difficult to find within a limited
inference budget. To address these challenges, we introduce \methodname,
a training-free framework for multi-objective scientific design. Using
joint rewards from candidate completions, \methodname performs approximate
Doob-guided local resampling without requiring reward gradients. To allocate
computation efficiently, it uses budget-efficient trajectory search to focus
computation on promising continuations. Across six DNA, protein, and RNA
tasks, \methodname attains the highest joint success rate at comparable
generation times, up to $1.98\times$ the strongest baseline on DNA and
protein, while maintaining high sequence uniqueness and naturalness.
\ifdefined\DiMOSArxiv\else
Our code is available at \codeurl.
\fi
\end{abstract}

\begin{figure}[t]
  \centering
  \captionsetup{skip=3pt}
  \includegraphics[width=\textwidth]{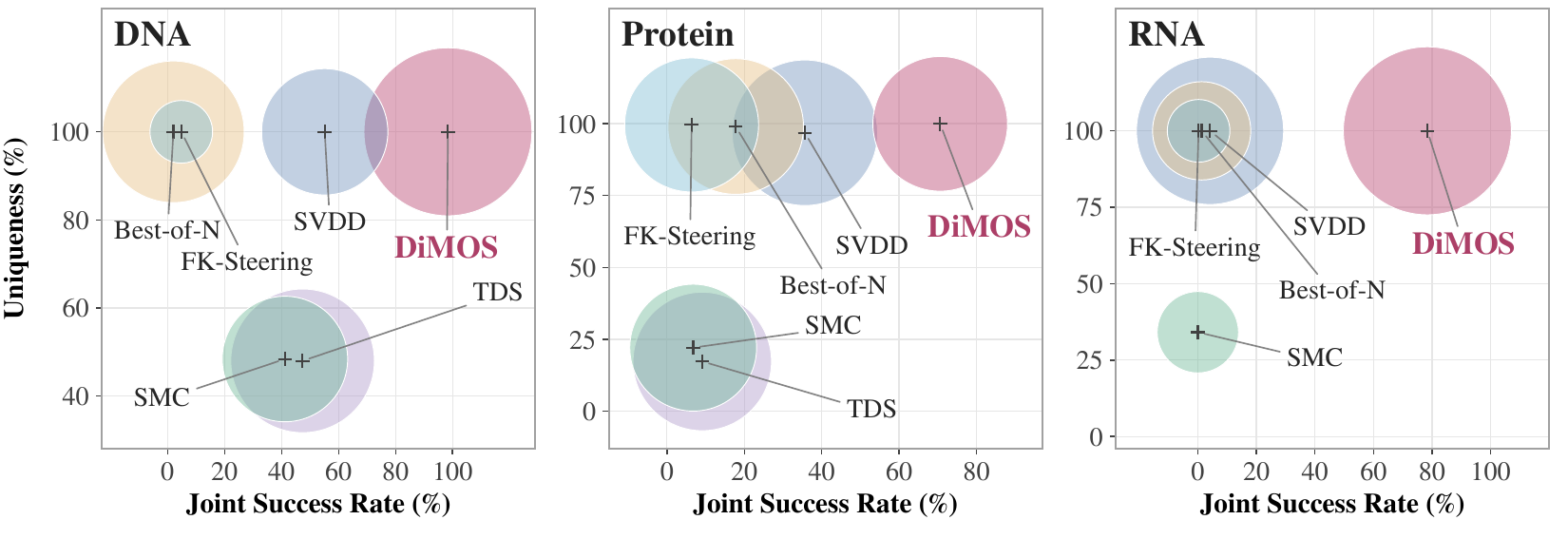}
  \caption{\textbf{Joint success and uniqueness.}
  Bubble area encodes within-task normalized 3-mer correlation (DNA),
  scRMSD $<2\,\text{\AA}$ rate (protein), and Optimus-MRL (RNA).
  Three-seed means.}
  \label{fig:dimos-performance}
  \par\vspace{-12pt}
\end{figure}

\section{Introduction}
\label{sec:introduction}

Scientific sequence design is a central task in synthetic biology and
therapeutic development~\citep{zrimec2022controlling,cao2022design}. In
practice, however, scientists often need to satisfy several requirements
simultaneously rather than improve a single property in
isolation~\citep{stanton2022accelerating,gruver2023protein}. For
example, regulatory DNA should combine target-cell activity with limited off-target
activity~\citep{gosai2023machine}, while therapeutic mRNA requires both
efficient translation and sufficient
stability~\citep{leppek2022combinatorial}. We therefore focus on
\emph{multi-objective scientific sequence design}, seeking candidate
sequences that jointly meet multiple objectives and constraints.

Masked diffusion models provide a generative foundation for this task by
learning sequence distributions through the reconstruction of masked
tokens~\citep{sahoo2024simple,wang2024finetuning}. However, sampling from a
learned distribution does not ensure that designs meet the specified
objectives~\citep{li2024derivative}. Fine-tuning-based methods address this gap
by updating pretrained models toward task-specific
rewards~\citep{black2024training,clark2024directly,wang2024finetuning,han2026discrete,domingoenrich2025adjoint},
but the added training becomes a burden as objectives or thresholds
change~\citep{li2024derivative}. This motivates
inference-time guidance that reuses a frozen generative prior while
steering generation toward different combinations of objectives and
constraints. Such guidance needs one signal to follow, so we assign each
candidate a joint reward $R(x)$ that is high only when every requirement is
met.

Yet guiding a frozen model with such a reward presents two challenges:
\begin{itemize}[leftmargin=*,topsep=1pt,itemsep=4pt,parsep=0pt]
\item \textbf{\ding{182} \textit{How can we guide generation when $R$ is
non-differentiable?}}
Scientific design often requires improving one property while enforcing hard
constraints on others. Even when reward models are differentiable,
pass-or-fail constraints provide no useful gradient signal for correcting
constraint violations. The reward model itself can also be
non-differentiable: black-box tools such as ViennaRNA assess RNA folding
without directly providing sequence gradients~\citep{lorenz2011viennarna}.
Gradient-based inference-time methods therefore struggle to follow such a
reward~\citep{wu2024practical,nisonoff2024unlocking}.
\item \textbf{\ding{183} \textit{How can we effectively search for jointly
feasible designs within a given inference budget?}}
Gradient-free methods handle non-differentiable feedback but often require
extensive sampling and repeated reward
evaluations~\citep{li2024derivative,singhal2025general}. Meanwhile,
simultaneously satisfying multiple objectives and constraints can leave
only a small feasible region, particularly when objectives conflict,
further increasing the search effort required~\citep{stanton2022accelerating}.
Efficiently allocating computation to discover jointly feasible candidates
within a limited inference budget is therefore another key challenge.
\end{itemize}

To address these challenges, we propose \methodname, a training-free
framework for multi-objective scientific design with frozen masked diffusion
models (Fig.~\ref{fig:dimos-overview}). Its approximate Doob-guided local
resampling steers generation using reward evaluations alone, accommodating
non-differentiable feedback. Building on this local guidance, budget-efficient
trajectory search directs computation toward promising continuations to find
jointly feasible designs.
Our three contributions are:
\begin{itemize}[leftmargin=*,topsep=0pt,partopsep=0pt,itemsep=3pt,parsep=0pt]
\item \textbf{Framework.}
We propose \methodname, a Doob-guided framework for inference-time
multi-objective design with frozen masked diffusion models. It supports
black-box objectives and hard constraints through evaluations of $R$,
keeping the generative model frozen and requiring reward values alone.
\item \textbf{Algorithm.}
We develop a budget-efficient trajectory search algorithm that combines
stage-wise branching with reward-guided refinement. It allocates computation
across local candidate comparisons and promising continuations to improve
joint success within a given inference budget.
\item \textbf{Effectiveness.}
Across six multi-objective tasks spanning DNA, protein, and RNA design
(Fig.~\ref{fig:dimos-performance}), \methodname attains the highest joint
success rate under matched wall-clock budgets, up to $1.98\times$ the
strongest baseline on DNA and protein, while maintaining sequence
naturalness.
\end{itemize}

\section{Related Work}
\label{sec:related-work}

\paragraph{Multi-Objective Scientific Design.}
Multi-objective scientific design seeks sequences with several desired
properties. Discrete diffusion supports regulatory DNA
design~\citep{sarkar2024designing}. Constrained reinforcement learning adapts generative models
to optimize target-cell activity while controlling off-target effects
\citep{chen2025ctrldna}. Other approaches explore Pareto trade-offs through
multi-objective Bayesian optimization \citep{stanton2022accelerating},
peptide tree search \citep{tang2025peptune}, or rank-directional flow
guidance \citep{chen2025mogdfm}. Pareto optimality alone, however, does not
ensure simultaneous satisfaction of application thresholds.
Our focus is finding designs that jointly meet specified property
thresholds through inference-time search.

\paragraph{Training-Free Adaptation.}
Training-free methods steer a frozen generative prior at sampling time
using reward feedback. (1) \emph{Gradient guidance} modifies reverse
transitions using reward gradients or gradient-based approximations
\citep{nisonoff2024unlocking,schiff2025simple}.
(2) \emph{Particle methods} reweight and resample trajectories, sometimes
using gradient-guided proposals
\citep{wu2024practical,singhal2025general}.
(3) \emph{Evaluation-based guidance} ranks or resamples candidate
continuations using reward values or estimated future rewards
\citep{guo2024plug,li2024derivative}, with iterative refinement wrapping
single-shot guidance in an outer noising-and-denoising
loop~\citep{uehara2025reward}. That outer loop is orthogonal to the per-step
mechanisms studied here and could wrap any of them.
Related probabilistic formulations connect adaptation to Doob's
$h$-transform, soft value functions, and posterior
sampling~\citep{uehara2025inference}. These include learned corrections
or samplers \citep{denker2024deft,rectorbrooks2025steering}, as well as
training-free inference for image editing \citep{nguyen2025hedit}
and adaptation with generic rewards \citep{zhu2026training}.
\methodname uses reward evaluations alone for approximate Doob-guided
local resampling, then concentrates computation on promising
continuations via shared prefixes and branch selection.

\section{Preliminaries}
\label{sec:preliminaries}

We introduce discrete diffusion \citep{austin2021structured} in Sec. \ref{sec:discrete_diffusion} and Doob's
$h$-transform in Sec. \ref{sec:doob_transform}.

% \citep{austin2021structured} , specialize its
% sampling dynamics to masked discrete diffusion, and describe Doob's
% $h$-transform for adapting the terminal distribution of a frozen generative
% model.

\subsection{Discrete Diffusion}
\label{sec:discrete_diffusion}

Let $\mathcal{X}=\mathcal{V}^N$ be the space of length-$N$ sequences over a
finite alphabet $\mathcal{V}$. A forward continuous-time Markov chain
$(Y_t)_{t\in[0,T]}$ corrupts
$Y_0\sim p_{\mathrm{data}}$ on a finite state space
$\mathcal{S}\supseteq\mathcal{X}$.
We use the column convention: for $y\neq x$,
$Q_t(y,x)\geq 0$ is the jump rate from $x$ to $y$.
The marginal probability vector $p_t$ evolves as
\begin{equation}
    \frac{dp_t}{dt}
    =
    Q_t p_t,
    \qquad
    Q_t(x,x)
    =
    -\sum_{y\neq x}Q_t(y,x),
    \qquad
    p_0=p_{\mathrm{data}}.
    \label{eq:forward_ctmc}
\end{equation}
The corruption process is chosen so that $p_T$ approaches a tractable
reference distribution.

Writing denoising time as $u=T-t$, define the reverse-time process
$\widetilde Y_u := Y_{T-u}$, so that denoising runs from $u=0$ to $u=T$,
corresponding to the forward-time direction $t=T$ to $t=0$.
Its exact off-diagonal generator is
\begin{equation}
    \widetilde Q_u(y,x)
    =
    Q_{T-u}(x,y)
    \frac{p_{T-u}(y)}{p_{T-u}(x)},
    \qquad y\neq x,
    \label{eq:reverse_ctmc}
\end{equation}
for states with $p_{T-u}(x)>0$.
Thus, starting from $\widetilde Y_0\sim p_T$, the reverse process
progressively removes the corruption introduced by the forward process and
recovers $\widetilde Y_T\sim p_0=p_{\mathrm{data}}$.

The reverse dynamics in Eq.~(\ref{eq:reverse_ctmc}) depend on the density
ratios $p_t(y)/p_t(x)$, which characterize how the reverse process should
move between discrete states \citep{lou2024discrete}. 

\paragraph{Masked Discrete Diffusion.}
A commonly used specialization of discrete diffusion for sequence generation
is a masked discrete diffusion model
\citep{sahoo2024simple,shi2024simplified}. For sequence generation, we augment the token alphabet
$\mathcal V$ with a mask symbol $\mathtt m\notin\mathcal V$ and let
$\widetilde{\mathcal V}=\mathcal V\cup\{\mathtt m\}$ and
$\mathcal S=\widetilde{\mathcal V}^N$.
Masked discrete diffusion independently replaces each clean token by
$\mathtt m$ at a time-dependent rate $\lambda_t$
\citep{sahoo2024simple,shi2024simplified}.
Writing
$\alpha_t=\exp(-\int_0^t \lambda_r\,dr)$
for the probability that a coordinate remains unmasked by time $t$, the
forward transition from $x_0\in\mathcal X$ is
\begin{equation}
    q_{0,t}(x\mid x_0)
    =
    \prod_{j=1}^N
    \left[
        \alpha_t \mathbf 1\{x^j=x_0^j\}
        +
        (1-\alpha_t)\mathbf 1\{x^j=\mathtt m\}
    \right].
    \label{eq:masked_corruption}
\end{equation}
Thus, the forward process progressively replaces clean tokens by masks and
never changes one clean token directly into another. As $\alpha_T\to 0$,
$p_T$ approaches this fully masked reference distribution $\delta_{\mathtt m^N}$.

For a masked coordinate $x^j=\mathtt m$, let
$x^{j\leftarrow a}$ denote the state obtained by replacing that coordinate
with $a\in\mathcal V$.
Under the masking kernel,
$
   \frac{p_t(x^{j\leftarrow a})}{p_t(x)}
    =
    \frac{\alpha_t}{1-\alpha_t}
    \mathbb P(Y_0^j=a\mid Y_t=x).
$
Thus the reverse dynamics are parameterized by the clean-token posterior.
We approximate this posterior with
$\mu_{\theta,t}^j(\cdot\mid x)\in\Delta(\mathcal V)$, where $\Delta(\mathcal V)$ is the simplex over $\mathcal V$, yielding
$
    \widehat Q_{\theta,T-t}
    \bigl(x^{j\leftarrow a},x\bigr)
    =
    \lambda_t
    \frac{\alpha_t}{1-\alpha_t}
    \mu_{\theta,t}^j(a\mid x).
$

At inference time, we apply a token temperature $\eta>0$ to the denoiser
distribution and define
\begin{equation}
    \mu_{\theta,t}^{j,\eta}(a\mid x)
    =
    \frac{
        \mu_{\theta,t}^j(a\mid x)^{1/\eta}
    }{
        \sum_{a'\in \mathcal V}
        \mu_{\theta,t}^j(a'\mid x)^{1/\eta}
    }.
    \label{eq:temperature_scaled_denoiser}
\end{equation}
Thus, $\eta=1$ recovers the denoiser distribution,
while $\eta<1$ sharpens it and $\eta>1$ flattens it.

The reverse process is simulated on a finite denoising grid
$0=t_0<t_1<\cdots<t_L=T$.
Discretizing the reverse dynamics over each interval
$[t_{\ell-1},t_\ell]$ gives a transition kernel
$
P_{\theta,t_\ell}(y\mid x)
:=
\mathbb{P}_\theta
\bigl(
X_{t_{\ell-1}}=y
\mid
X_{t_\ell}=x
\bigr).$
For masked diffusion, this transition progressively reveals masked
coordinates according to the prescribed noise schedule. Newly revealed
tokens are sampled from
$\mu_{\theta,t_\ell}^{j,\eta}(\cdot\mid x)$, while previously revealed
tokens remain fixed. We treat the fixed sampling temperature $\eta$ as
part of the finite-step sampling procedure and suppress it in the notation
$P_{\theta,t_\ell}$. We write $\mathbb{P}_\theta$ for the resulting path law and
$p_\theta$ for its terminal distribution at $t_0$.

\subsection{Diffusion Guidance via Doob's \texorpdfstring{$h$}{h}-Transform}
\label{sec:doob_transform}
Reward-guided diffusion reweights the terminal distribution of a pretrained
model \citep{wu2024practical,li2024derivative}.
Following the reward-tilting perspective in KL-regularized reinforcement
learning and stochastic control
\citep{kappen2005linear,levine2018reinforcement}, we parameterize the terminal
weight by the reward function $R:\mathcal X\to\mathbb R$ of
Sec.~\ref{sec:introduction}.

\begin{definition}[Guided Terminal Distribution]
\label{def:guided_target}
Given a reward function $R:\mathcal X\to\mathbb R$ and a reward temperature
$\beta>0$, the corresponding guided terminal distribution is
\begin{equation}
\pi_R(x_0)
=
\frac{
p_\theta(x_0)\exp\left(R(x_0)/\beta\right)
}{
Z_R
},
\qquad
Z_R
=
\mathbb E_{X_0\sim p_\theta}
\left[
\exp\left(R(X_0)/\beta\right)
\right].
\label{eq:reward_tilt}
\end{equation}
We refer to the problem of modifying the reverse dynamics of the pretrained diffusion model so that their
terminal marginal is $\pi_R$ as terminal-reweighted diffusion guidance.
\end{definition}

Both soft objectives and hard constraints can be represented through the
choice of $R$. A soft objective assigns a scalar reward $R(x)\in\mathbb R$ to each design,
with larger values indicating greater preference. For a hard constraint specified by a feasible set
$\mathcal A\subseteq\mathcal X$, we use the indicator-based reward function
\begin{equation}
R(x)
=
-P\mathbf{1}\{x\notin\mathcal A\},
\qquad P>0.
\label{eq:hard_reward}
\end{equation}
In the limit $P\to\infty$,
$\exp\left(R(x)/\beta\right)$ reduces to
$\mathbf{1}\{x\in\mathcal A\}$, recovering exact conditioning on the feasible set.
Thus, soft objectives and hard constraints share the same reward-tilting
formulation.

Doob's $h$-transform provides an exact solution to the terminal-reweighting
problem for Markov processes
\citep{rogers2000diffusions,sarkka2019applied}.
More generally, for a positive terminal weight
$\phi:\mathcal X\to\mathbb R_{>0}$, define the corresponding
terminal-reweighted distribution by
\begin{equation}
\pi_\phi(x_0)
=
\frac{p_\theta(x_0)\phi(x_0)}
{\mathbb E_{X_0\sim p_\theta}[\phi(X_0)]}.
\label{eq:general_terminal_reweighting}
\end{equation}
Definition~\ref{def:guided_target} corresponds to the choice
$\phi(x_0)=\exp(R(x_0)/\beta)$. To construct a reverse process with terminal distribution $\pi_\phi$, we propagate the terminal weight $\phi$ backward through the original reverse dynamics using Doob's $h$-function.

\begin{definition}[Doob's $h$-Function]
\label{def:doob_h}
For the terminal weight $\phi$, define the backward value function
\begin{equation}
h_\ell(x)
=
\mathbb E_{\mathbb P_\theta}
\left[
\phi(X_0)
\mid
X_{t_\ell}=x
\right],
\qquad
h_0(x_0)=\phi(x_0).
\label{eq:doob_value}
\end{equation}
\end{definition}

The $h$-function propagates the terminal preference backward through the
original reverse process; it is the soft value function of the corresponding
control problem~\citep{uehara2025inference}. Using this function, Doob's $h$-transform
reweights the reverse transitions so as to realize the desired terminal
distribution.

\begin{proposition}[Doob's $h$-Transform]
\label{prop:doob_transform}
Let $p_\theta$ denote the terminal distribution at $t_0$ of the original
reverse sampler initialized from $X_{t_L}=x_L$. Define the transformed reverse transition kernel by
\begin{equation}
P^h_{\theta,t_\ell}(y\mid x)
=
P_{\theta,t_\ell}(y\mid x)
\frac{h_{\ell-1}(y)}{h_\ell(x)}.
\label{eq:doob_kernel}
\end{equation}
Suppose the reverse sampler is initialized from the same fixed initial state
$X_{t_L}=x_L$ as the original diffusion sampler.
Then $P^h_{\theta,t_\ell}$ is a valid Markov transition kernel, and the
terminal marginal of the transformed chain is exactly $\pi_\phi$.
\end{proposition}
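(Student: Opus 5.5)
The plan is to work entirely with the discrete-time chain on the grid $t_L>\cdots>t_0$ and use only the tower property of conditional expectation under $\mathbb P_\theta$. The first step is to record the backward recursion satisfied by the $h$-function of Definition~\ref{def:doob_h}. By the Markov property of the original reverse sampler, conditioning on $X_{t_{\ell-1}}$ gives
\begin{equation*}
h_\ell(x)=\sum_{y}P_{\theta,t_\ell}(y\mid x)\,h_{\ell-1}(y),
\end{equation*}
with $h_0=\phi$. Since $\phi>0$ and $P_{\theta,t_\ell}(\cdot\mid x)$ is a probability vector, induction gives $h_\ell(x)>0$ for every state $x$. At least this holds for every state reachable from $x_L$, which is all we need. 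Hence the ratio in Eq.~(\ref{eq:doob_kernel}) is well defined.

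Validity of $P^h_{\theta,t_\ell}$ as a Markov kernel then follows at once. Nonnegativity is inherited from $P_{\theta,t_\ell}$ and the positivity of $h$. Summing Eq.~(\ref{eq:doob_kernel}) over $y$ and applying the recursion yields $h_\ell(x)/h_\ell(x)=1$.

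For the terminal marginal, I would compute the path law of the transformed chain started at $x_L$. For a path $(x_L,x_{L-1},\dots,x_0)$ its probability is $\prod_{\ell=1}^L P^h_{\theta,t_\ell}(x_{\ell-1}\mid x_\ell)$. In this product the $h$-ratios telescope, leaving
\begin{equation*}
\prod_{\ell=1}^L P_{\theta,t_\ell}(x_{\ell-1}\mid x_\ell)\cdot\frac{h_0(x_0)}{h_L(x_L)}.
\end{equation*}
Marginalizing over $x_{L-1},\dots,x_1$ gives
\begin{equation*}
p_\theta(x_0)\,\phi(x_0)/h_L(x_L).
\end{equation*}
Finally, $h_L(x_L)=\mathbb E_{\mathbb P_\theta}[\phi(X_0)\mid X_{t_L}=x_L]$ equals $\mathbb E_{X_0\sim p_\theta}[\phi(X_0)]$, because the original sampler is started deterministically at $x_L$. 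This matches Eq.~(\ref{eq:general_terminal_reweighting}) exactly.

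The only point needing care is well-definedness of the kernel. One must note that $h_\ell$ is evaluated only at states with positive probability under $\mathbb P_\theta$ from $x_L$, or else extend $h_\ell$ by the recursion to all states. The initialization also matters: it is the fixed $x_L$, rather than a distribution, that makes $h_L(x_L)$ coincide with the normalizing constant. Apart from that, the argument is a routine telescoping computation.
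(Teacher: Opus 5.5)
Your proposal is correct and follows the paper's proof essentially step for step. Both use the backward recursion for $h_\ell$ from the Markov property, normalize $P^h_{\theta,t_\ell}$ via that recursion, telescope the $h$-ratios along paths from the fixed $x_L$, and identify $h_L(x_L)=\mathbb E_{X_0\sim p_\theta}[\phi(X_0)]$. Your remark about restricting to states reachable from $x_L$, or extending $h_\ell$ by the recursion, handles the positivity point slightly more carefully than the paper's one-line treatment.
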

The proof of Proposition~\ref{prop:doob_transform} is deferred to
Appendix~\ref{app:doob_proof}. In masked discrete diffusion, the
reverse sampler always starts from the same fully masked
state $X_{t_L}=\mathtt{m}^N$. Hence, the fixed-initial-state condition of
Proposition~\ref{prop:doob_transform} is naturally satisfied, and
Doob's $h$-transform changes the terminal marginal from $p_\theta$ to
$\pi_\phi$ by reweighting the pretrained reverse transitions.

\begin{figure*}[t]
  \centering
  \captionsetup{skip=3pt}
  \includegraphics[width=\textwidth]{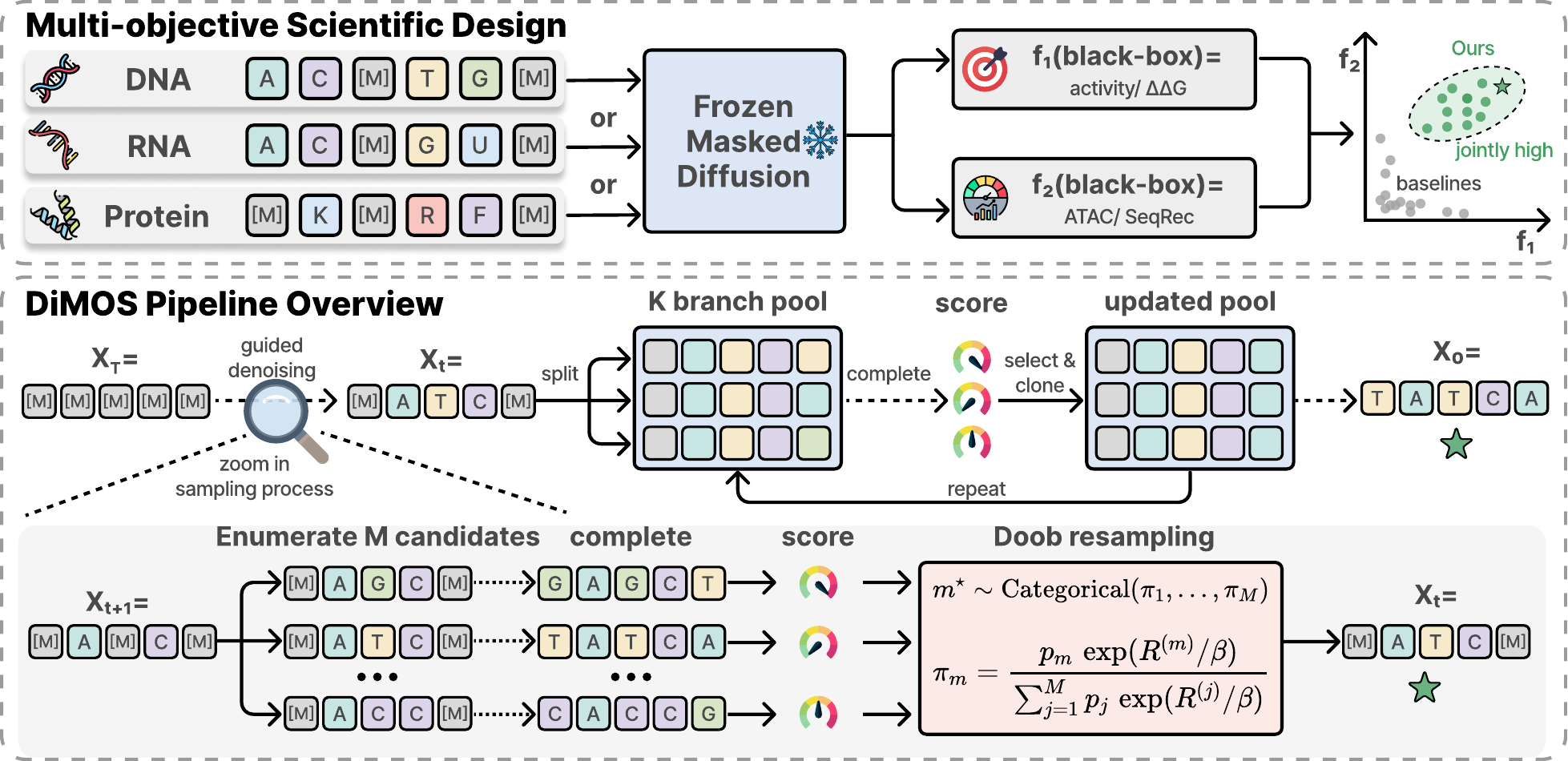}
  \caption{\textbf{Overview of \methodname.}
  \textbf{Top:} multi-objective design with a frozen prior.
  \textbf{Middle:} $K$ branches from step $s$, with reward-guided refinement.
  \textbf{Bottom:} $M$ token candidates are enumerated, completed, and
  stochastically resampled using prior and reward weights.}
  \label{fig:dimos-overview}
  \vspace{-12pt}
\end{figure*}

\section{Method}
\label{sec:method}

In this section, we introduce \methodname for multi-objective scientific
design using a frozen masked diffusion model. We first express the property
requirements as a joint reward (Sec.~\ref{sec:method-reward}). Using this
feedback, approximate Doob-guided local resampling compares candidate updates
and samples the next state without reward gradients
(Sec.~\ref{sec:method-doob}). We then introduce budget-efficient
trajectory search to concentrate computation on promising continuations
(Sec.~\ref{sec:method-search}).

\subsection{Joint Reward for Multiple Requirements}
\label{sec:method-reward}

In scientific design, candidates often need to meet multiple property
thresholds. We therefore instantiate the feasible-set formulation
(hard constraints) of Sec.~\ref{sec:doob_transform} with per-property
thresholds, rather than directly maximizing a weighted sum of per-property
rewards (soft objectives), which does not in general guarantee joint feasibility.
This setting also challenges gradient-based guidance: pass-or-fail
constraints provide no useful gradient signal for correcting violations,
and reward models may themselves be non-differentiable.

% \zqj{is it a reward?}
We consider $n$ reward functions $R_k:\mathcal X\to\mathbb R$, each
with an acceptance threshold $c_k$ and a direction $\epsilon_k\in\{1,-1\}$,
where $1$ denotes a lower-bound requirement and $-1$ an upper-bound
requirement. Define the signed constraint margin
$d_k(x)=\epsilon_k(R_k(x)-c_k)$.
Requirement $k$ is satisfied if and only if $d_k(x)\geq0$.
% \zqj{why not say when $d_k(x)$ is 1}
% $\geq 0 or =1$
The joint feasible set is therefore
$\mathcal A=\{x:\ d_k(x)\geq0\ \text{for all}\ k\}$.
A single indicator of $\mathcal A$ assigns the same penalty to every
infeasible design, regardless of which requirements are satisfied.
To retain feedback about individual requirements, we encode each using
a hard or soft gate and aggregate in log space,
\begin{equation}
    R(x)
    =
    \sum_{k=1}^{n} w_k\,g_k\bigl(d_k(x)\bigr),
    \qquad
    g_k(d)
    =
    \begin{cases}
        -P\,\mathbf 1\{d<0\} & \text{(hard gate)},\\[2pt]
        \log\sigma(d/\tau_k) & \text{(soft gate)},
    \end{cases}
    \label{eq:reward}
\end{equation}
where $\sigma$ is the logistic sigmoid, $w_k>0$ weights each requirement,
and $\tau_k>0$ controls the soft gate's transition width. A hard gate
returns $0$ when the requirement is satisfied and $-P$ otherwise.
A soft gate remains negative and increases continuously toward $0$ as
the margin grows, so it still distinguishes candidates above the threshold.
% \zqj{I can't understand this sentence}
Both gate types use the same feasibility criteria while providing different
reward feedback.
We retain the finite penalty $P>0$ of Eq.~(\ref{eq:hard_reward}), so
violating a hard requirement does not force its weight to zero.
Under the exponential tilt, log-space aggregation yields multiplicative
per-requirement factors (Appendix~\ref{app:hard-gate}).

\subsection{Approximate Doob-Guided Local Resampling}
\label{sec:method-doob}

Doob's $h$-transform connects terminal reward to decisions during denoising.
Using $\phi(x_0)=\exp(R(x_0)/\beta)$ with the joint reward in
Eq.~(\ref{eq:reward}), Proposition~\ref{prop:doob_transform} gives the
relative weight of a candidate next state $y$ from the current state $x$ at
time $t_\ell$,
\begin{equation}
    P^h_{\theta,t_\ell}(y\mid x)
    \propto
    P_{\theta,t_\ell}(y\mid x)\,
    h_{\ell-1}(y),
    \label{eq:doob_relative_weight}
\end{equation}
since the normalizer $h_\ell(x)$ is shared by all candidates. Guidance thus
requires relative future values, but evaluating $h_{\ell-1}(y)$ entails
averaging over all remaining continuations.

\paragraph{Completion-Based Value Estimation.}
To approximate $h_{\ell-1}(y)$, we complete the masked state $y$
to a clean sequence $\hat{x}_0(y)$ using the frozen denoiser,
and use its terminal weight as a proxy~\citep{li2024derivative,uehara2025reward}:
\begin{equation}
    \widehat h_{\ell-1}(y)
    =
    \phi\bigl(\hat{x}_0(y)\bigr)
    =
    \exp\bigl(R(\hat{x}_0(y))/\beta\bigr).
    \label{eq:completion_h}
\end{equation}

\paragraph{Enumeration and Stochastic Resampling.}
To avoid repeated token proposals, we enumerate distinct alternatives at
one newly revealed position.
At a guided step $t_\ell$, fix $x=x_{t_\ell}$ and draw
$Z\sim P_{\theta,t_\ell}(\cdot\mid x)$.
Let $E=\{u:x^u=\mathtt{m},\,Z^u\in\mathcal V\}$ be the reveal set.
When $E\neq\varnothing$, select the position $J\in E$ with the largest
denoiser entropy, breaking ties by index, and enumerate the
$M\leq|\mathcal V|$ most probable tokens $a_1,\ldots,a_M$ under
$\mu_{\theta,t_\ell}^{J,\eta}(\cdot\mid x)$.
For each candidate $y^{(m)}=Z^{J\leftarrow a_m}$, assign
\begin{equation}
    \log\widehat w_m
    =
    \log\mu_{\theta,t_\ell}^{J,\eta}(a_m\mid x)
    +
    \log\widehat h_{\ell-1}(y^{(m)}).
    \label{eq:enumeration_weights}
\end{equation}
We sample
$m^\star\sim\operatorname{Categorical}
(\operatorname{softmax}(\log\widehat w_{1:M}))$
and retain $y^{(m^\star)}$.
Under the factorized token update, all other transition factors are
shared; the token prior appears explicitly because alternatives are
enumerated rather than sampled from it.
Each update uses at most $M$ reward evaluations without reward gradients.

We next bound the discrepancy from the full Doob transition, separating
context-reweighting, top-$M$ truncation, and value-estimation errors.
For this analysis, assume $E\neq\varnothing$ almost surely.

The shared context
$C=(E,(Z^u)_{u\in E\setminus\{J\}})$ records the reveal set and
all newly revealed tokens except the one at $J$.
Its distribution under the base transition is
$\nu_\ell(c\mid x)
=\mathbb{P}_{Z\sim P_{\theta,t_\ell}(\cdot\mid x)}(C=c)$.
The context value
$\Gamma_\ell(c;x)=\mathbb E[h_{\ell-1}(Z)\mid C=c]$
averages over the token at $J$ under the same base transition.
Below, expectations and variances indexed by $\nu_\ell$ are over
$C\sim\nu_\ell(\cdot\mid x)$.

Given $C=c$, the token at $J$ has conditional law
$p_\ell^h(\cdot\mid c,x)$ under the exact Doob transition
$P_{\theta,t_\ell}^h(\cdot\mid x)$.
The enumerated tokens form
$S_M(c)=\{a_1,\ldots,a_M\}\subseteq\mathcal V$.
Restricting the exact conditional to this set and renormalizing gives
\[
    p_{\ell,M}^h(a\mid c,x)
    =
    \frac{
        p_\ell^h(a\mid c,x)\mathbf 1\{a\in S_M(c)\}
    }{
        p_\ell^h(S_M(c)\mid c,x)
    }.
\]
Using $\widehat h_{\ell-1}$ instead induces the practical conditional
$\widehat p_{\ell,M}(\cdot\mid c,x)$, averaged over any completion
randomness. Combining this conditional with the unchanged context law
$\nu_\ell(\cdot\mid x)$ defines the full-state transition
$\widehat P_{\theta,t_\ell}^{(M)}(\cdot\mid x)$.

% Queue the DNA table for the following page without resizing it.
\begin{table*}[t]
  \centering
  \caption{\textbf{DNA design with hard and soft gates.}
  Activity + ATAC results with held-out activity evaluation. Both modes use
  the same success criteria, differing only in whether activity receives a
  hard gate (binary penalty) or a soft gate (smooth reward).}
  \label{tab:dna-main}
  \setlength{\tabcolsep}{3.6pt}
  \renewcommand{\arraystretch}{1.13}
  \resizebox{\textwidth}{!}{%
  \begin{tabular}{lIcIccIccJcIcIcIc}
    \dimosOuterRule
    \multirow{2}{*}{\textbf{Method}}
      & \multirow{2}{*}{\textbf{Joint SR (\%) $\uparrow$}}
      & \multicolumn{2}{cI}{\textbf{Activity $\uparrow$}}
      & \multicolumn{2}{cJ}{\textbf{ATAC $\uparrow$}}
      & \multirow{2}{*}{\textbf{Uniq. (\%)}}
      & \multirow{2}{*}{\textbf{3-mer $r$}}
      & \multirow{2}{*}{\textbf{JASPAR $r$}}
      & \multirow{2}{*}{\textbf{Time (ms)}} \\
      & & \textbf{Median} & \textbf{Top-10}
      & \textbf{Median} & \textbf{Top-10}
      & & & & \\
    \dimosMajorRule
    \rowcolor{dimosTask!70}
    \multicolumn{10}{c}{\color{dimosNavy}\textit{Hard-Gate Mode}} \\
    \baseline{Pretrained}
      & \meanstd{0.00}{0.00}
      & \meanstd{0.18}{0.01}
      & \meanstd{2.41}{0.39}
      & \meanstd{0.03}{0.00}
      & \meanstd{0.65}{0.06}
      & \meanstd{100.00}{0.00}
      & \meanstd{-0.04}{0.01}
      & \meanstd{0.29}{0.01}
      & \meanstd{8.46}{0.23} \\
    \baseline{Best-of-N}
      & \meanstd{2.34}{0.34}
      & \meanstd{1.37}{0.12}
      & \meanstd{5.32}{0.19}
      & \meanstd{0.45}{0.01}
      & \meanstd{0.90}{0.04}
      & \meanstd{100.00}{0.00}
      & \meanstd{0.76}{0.00}
      & \meanstd{0.90}{0.00}
      & \meanstd{271.56}{0.68} \\
    \baseline{TDS}
      & \secondresult{\meanstd{78.80}{6.63}}
      & \secondresult{\meanstd{4.80}{0.13}}
      & \meanstd{6.19}{0.21}
      & \meanstd{0.84}{0.02}
      & \meanstd{0.97}{0.01}
      & \meanstd{48.12}{1.29}
      & \meanstd{0.92}{0.03}
      & \meanstd{0.91}{0.01}
      & \meanstd{171.39}{0.18} \\
    \baseline{SMC}
      & \meanstd{76.04}{8.17}
      & \meanstd{4.74}{0.11}
      & \meanstd{6.25}{0.24}
      & \meanstd{0.83}{0.03}
      & \meanstd{0.97}{0.01}
      & \meanstd{48.02}{2.62}
      & \meanstd{0.92}{0.04}
      & \meanstd{0.91}{0.01}
      & \meanstd{89.28}{0.10} \\
    \baseline{FK-Steering}
      & \meanstd{4.90}{0.58}
      & \meanstd{0.84}{0.06}
      & \meanstd{5.51}{0.14}
      & \meanstd{0.30}{0.01}
      & \meanstd{0.92}{0.01}
      & \meanstd{100.00}{0.00}
      & \meanstd{0.62}{0.01}
      & \meanstd{0.86}{0.00}
      & \meanstd{111.00}{0.83} \\
    \baseline{SVDD}
      & \meanstd{45.05}{0.97}
      & \meanstd{4.14}{0.03}
      & \secondresult{\meanstd{6.62}{0.04}}
      & \secondresult{\meanstd{0.89}{0.01}}
      & \secondresult{\meanstd{0.99}{0.00}}
      & \meanstd{100.00}{0.00}
      & \meanstd{0.69}{0.00}
      & \meanstd{0.91}{0.00}
      & \meanstd{244.61}{0.53} \\
    \dimosMinorRule
    \rowcolor{dimosOurs!70}
    \textbf{\methodname\ (Ours)}
      & \firstresult{\meanstd{87.55}{0.65}}
      & \firstresult{\meanstd{5.34}{0.05}}
      & \firstresult{\meanstd{6.98}{0.07}}
      & \firstresult{\meanstd{0.99}{0.00}}
      & \firstresult{\meanstd{1.00}{0.00}}
      & \meanstd{100.00}{0.00}
      & \meanstd{0.86}{0.00}
      & \meanstd{0.93}{0.00}
      & \meanstd{234.63}{0.33} \\
    \dimosMajorRule
    \rowcolor{dimosTask!70}
    \multicolumn{10}{c}{\color{dimosNavy}\textit{Soft-Gate Mode}} \\
    \baseline{Pretrained}
      & \meanstd{0.00}{0.00}
      & \meanstd{0.18}{0.01}
      & \meanstd{2.41}{0.39}
      & \meanstd{0.03}{0.00}
      & \meanstd{0.65}{0.06}
      & \meanstd{100.00}{0.00}
      & \meanstd{-0.04}{0.01}
      & \meanstd{0.29}{0.01}
      & \meanstd{8.46}{0.23} \\
    \baseline{Best-of-N}
      & \meanstd{2.08}{0.39}
      & \meanstd{1.88}{0.03}
      & \meanstd{5.29}{0.22}
      & \meanstd{0.42}{0.01}
      & \meanstd{0.90}{0.03}
      & \meanstd{100.00}{0.00}
      & \meanstd{0.79}{0.01}
      & \meanstd{0.91}{0.01}
      & \meanstd{240.96}{0.65} \\
    \baseline{TDS}
      & \meanstd{47.40}{7.16}
      & \meanstd{4.14}{0.15}
      & \meanstd{6.15}{0.33}
      & \meanstd{0.79}{0.02}
      & \secondresult{\meanstd{0.95}{0.01}}
      & \meanstd{47.97}{1.55}
      & \meanstd{0.79}{0.02}
      & \meanstd{0.88}{0.02}
      & \meanstd{200.87}{0.93} \\
    \baseline{SMC}
      & \meanstd{41.20}{5.15}
      & \meanstd{4.02}{0.18}
      & \meanstd{5.71}{0.05}
      & \meanstd{0.81}{0.02}
      & \secondresult{\meanstd{0.95}{0.02}}
      & \meanstd{48.39}{1.48}
      & \meanstd{0.74}{0.05}
      & \meanstd{0.86}{0.01}
      & \meanstd{89.73}{0.15} \\
    \baseline{FK-Steering}
      & \meanstd{4.84}{0.34}
      & \meanstd{1.45}{0.07}
      & \meanstd{5.76}{0.12}
      & \meanstd{0.25}{0.00}
      & \meanstd{0.92}{0.01}
      & \meanstd{100.00}{0.00}
      & \meanstd{0.61}{0.02}
      & \meanstd{0.85}{0.01}
      & \meanstd{110.62}{0.32} \\
    \baseline{SVDD}
      & \secondresult{\meanstd{55.16}{1.55}}
      & \secondresult{\meanstd{4.66}{0.01}}
      & \secondresult{\meanstd{7.24}{0.09}}
      & \secondresult{\meanstd{0.83}{0.01}}
      & \firstresult{\meanstd{0.99}{0.00}}
      & \meanstd{100.00}{0.00}
      & \meanstd{0.74}{0.01}
      & \meanstd{0.93}{0.00}
      & \meanstd{244.70}{0.56} \\
    \dimosMinorRule
    \rowcolor{dimosOurs!70}
    \textbf{\methodname\ (Ours)}
      & \firstresult{\meanstd{98.39}{0.29}}
      & \firstresult{\meanstd{5.95}{0.02}}
      & \firstresult{\meanstd{7.32}{0.06}}
      & \firstresult{\meanstd{0.99}{0.00}}
      & \firstresult{\meanstd{0.99}{0.00}}
      & \meanstd{100.00}{0.00}
      & \meanstd{0.88}{0.00}
      & \meanstd{0.93}{0.00}
      & \meanstd{234.60}{0.57} \\
    \dimosOuterRule
  \end{tabular}%
  }
\end{table*}

\begin{theorem}[Full-state error of coordinate-wise Doob resampling]
\label{thm:coordinate_doob_tv}
For any $1\leq M\leq|\mathcal V|$, suppose
$\Gamma_\ell(C;x)>0$ and
$p_\ell^h(S_M(C)\mid C,x)>0$ almost surely under
$\nu_\ell(\cdot\mid x)$.
Then $h_\ell(x)=\mathbb E_{\nu_\ell}[\Gamma_\ell(C;x)]$ and
\begin{equation}
\begin{aligned}
\operatorname{TV}\!\left(
    \widehat P_{\theta,t_\ell}^{(M)}(\cdot\mid x),
    P_{\theta,t_\ell}^h(\cdot\mid x)
\right)&\;\leq
\underbrace{
    \frac12\mathbb E_{\nu_\ell}\!\left[
        \left|
        \frac{\Gamma_\ell(C;x)}{h_\ell(x)}-1
        \right|
    \right]
}_{\text{context-reweighting error}}
+
\underbrace{
    \mathbb E_{\nu_\ell}\!\left[
        p_\ell^h(S_M(C)^c\mid C,x)
    \right]
}_{\text{top-$M$ truncation error}}
\\
&\;+
\underbrace{
    \mathbb E_{\nu_\ell}\!\left[
        \operatorname{TV}\!\left(
            \widehat p_{\ell,M}(\cdot\mid C,x),
            p_{\ell,M}^h(\cdot\mid C,x)
        \right)
    \right]
}_{\text{value-estimation error}}.
\end{aligned}
\label{eq:coordinate_doob_tv}
\end{equation}
The first term is at most
$\sqrt{\operatorname{Var}_{\nu_\ell}[\Gamma_\ell(C;x)]}/(2h_\ell(x))$.
\end{theorem}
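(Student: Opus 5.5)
The plan is to decompose the full-state Doob transition along the context $C$ and compare it, context by context, with the practical transition. A state $Z$ with $E\neq\varnothing$ is determined by the pair $(C,Z^J)$, because $J$ is a deterministic function of $E$ and $x$ (largest entropy, ties broken by index). Hence a law on next states is the same as a law on pairs $(c,a)$.

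First, I will identify $h_\ell(x)$ with $\mathbb E_{\nu_\ell}[\Gamma_\ell(C;x)]$. By the Markov property of the base path law, $h_\ell(x)=\mathbb E_{Z\sim P_{\theta,t_\ell}(\cdot\mid x)}[h_{\ell-1}(Z)]$. The tower property over $C$ then gives $\mathbb E_{\nu_\ell}[\Gamma_\ell(C;x)]$.

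Next, I will write the exact Doob kernel in product form. From Eq.~(\ref{eq:doob_kernel}), writing $q_\ell(a\mid c,x)$ for the base conditional of $Z^J$ given $C=c$,
\[
P^h_{\theta,t_\ell}(c,a\mid x)=\nu_\ell(c\mid x)\,\frac{\Gamma_\ell(c;x)}{h_\ell(x)}\cdot\frac{q_\ell(a\mid c,x)\,h_{\ell-1}(c,a)}{\Gamma_\ell(c;x)}.
\]
So the exact context marginal is $\nu^h_\ell(c)=\nu_\ell(c\mid x)\Gamma_\ell(c;x)/h_\ell(x)$, and the exact conditional is $p^h_\ell(\cdot\mid c,x)$. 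The assumption $\Gamma_\ell>0$ makes this well defined. The practical kernel has the same structure, with context law $\nu_\ell$ and conditional $\widehat p_{\ell,M}$.

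With this structure in hand, I will apply a triangle inequality through two intermediate laws. The first is $\nu_\ell\otimes p^h_\ell$, and the second is $\nu_\ell\otimes p^h_{\ell,M}$. The three TV distances come out as follows.

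\begin{enumerate}
\item $\mathrm{TV}(\nu_\ell\otimes p^h_\ell,\ \nu^h_\ell\otimes p^h_\ell)$. Two joint laws with the same conditional have TV equal to the TV of their marginals. This equals $\tfrac12\sum_c\nu_\ell(c)\,|\Gamma_\ell/h_\ell-1|$, the context-reweighting term.
\item $\mathrm{TV}(\nu_\ell\otimes p^h_{\ell,M},\ \nu_\ell\otimes p^h_\ell)$. Two joint laws with the same marginal have TV equal to $\mathbb E_{\nu_\ell}$ of the conditional TV. For a distribution $p$ and its renormalized restriction to a set $S$, the TV equals $p(S^c)$. To check this, note that the mass gained on $S$ is $p(S)(1/p(S)-1)=p(S^c)$, which equals the mass lost off $S$. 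This gives the truncation term, and the assumption $p^h_\ell(S_M)>0$ ensures the restriction is defined.
\item $\mathrm{TV}(\nu_\ell\otimes\widehat p_{\ell,M},\ \nu_\ell\otimes p^h_{\ell,M})$. The same shared-marginal identity gives the value-estimation term directly.
\end{enumerate}

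Finally, for the variance bound I use the identity from the first step, which gives $\Gamma_\ell/h_\ell-1=(\Gamma_\ell-\mathbb E_{\nu_\ell}\Gamma_\ell)/h_\ell$. Cauchy--Schwarz then yields $\mathbb E|\Gamma_\ell-\mathbb E\Gamma_\ell|\le\sqrt{\operatorname{Var}_{\nu_\ell}\Gamma_\ell}$, so the first term is at most $\sqrt{\operatorname{Var}_{\nu_\ell}[\Gamma_\ell(C;x)]}/(2h_\ell(x))$.

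The main obstacle is the bookkeeping in the second step. One must verify that the map $Z\mapsto(C,Z^J)$ is injective, so that TV on full states equals TV on pairs. One must also check that the practical conditional $\widehat p_{\ell,M}$, with completion randomness averaged out, is a genuine conditional given $C$. This requires that $S_M(c)$ and $J$ depend only on $(c,x)$, which holds because $\mu^{J,\eta}_{\theta,t_\ell}(\cdot\mid x)$ depends only on $x$ and $J$.
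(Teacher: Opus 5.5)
Your proposal is correct and follows essentially the same route as the paper's proof. You use the same ingredients: the bijection $Z\leftrightarrow(C,Z^J)$, the factorization of the exact Doob kernel into the reweighted context law $\nu_\ell\,\Gamma_\ell/h_\ell$ times $p_\ell^h$, the same two intermediate kernels $\nu_\ell\otimes p^h_\ell$ and $\nu_\ell\otimes p^h_{\ell,M}$ with the triangle inequality, the restriction identity for the truncation term, and Cauchy--Schwarz for the variance bound. You also remark that $J$ and $S_M(c)$ depend only on $(c,x)$, so $\widehat p_{\ell,M}$ is a genuine conditional given $C$; this makes explicit a point the paper simply builds into its definition of $\widehat P^{(M)}_{\theta,t_\ell}$.
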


The proof and empirical diagnostics are in
Appendices~\ref{app:coordinate_doob_tv} and~\ref{app:local-error}.
When $M=|\mathcal V|$, the truncation term vanishes.
If the completion-based values approximate the exact Doob values well,
the value-estimation term is also small, leaving the discrepancy primarily
controlled by the variation of the shared-context value.

\subsection{Budget-Efficient Trajectory Search}
\label{sec:method-search}

Jointly satisfying multiple requirements can leave a small feasible
region~\citep{xia2024balancing}, making repeated independent sampling costly. We therefore share trajectory prefixes and allocate more search budget to branches with higher estimated joint rewards.

\paragraph{Stage-Wise Branching.}
To control the cost of multi-trajectory exploration, we share a denoising
prefix within each of $B$ independent generation groups, each returning one
design. Each group follows a single trajectory for $s$ steps, then copies
its partially denoised state into $K$ branches. These branches explore
alternative continuations through the stochastic local updates of
Sec.~\ref{sec:method-doob}. The branch width $K$ determines how many
continuations are explored concurrently, while the split step $s$ trades a
longer shared prefix for a shorter branching phase.

\paragraph{Reward-Guided Refinement.}
At post-split checkpoints, we score branches using the completion-based
estimate of Sec.~\ref{sec:method-doob}. Within each group, we copy the state
with the highest estimated joint reward into all $K$ slots, then use
stochastic local updates to explore new continuations. Each group returns
its highest-reward final design. Group-local selection prevents one
trajectory from replacing other groups' outputs, helping preserve sequence
uniqueness. Prefix sharing reduces state updates over $L$ steps from $KL$
to $s+K(L-s)$ per group. Each guided update compares up to $M$ candidates
per branch, with additional reward evaluations for branch selection
(Appendix~\ref{app:algorithm}).

\section{Experiments}
\label{sec:experiments}

In this section, we evaluate \methodname on DNA, protein, and RNA design
using frozen generative priors. We assess joint
success rate (SR) and design quality under hard-gate and soft-gate modes, then examine
component contributions, hyperparameter sensitivity, and time-budget trade-offs.

\subsection{Experimental Setup}
\label{sec:experiments-setup}

\paragraph{Tasks and Gate Modes.}
We study \textbf{\ding{182} Activity + ATAC} for DNA
regulatory activity and chromatin accessibility~\citep{gosai2023machine,wang2024finetuning},
and \textbf{\ding{183} $\Delta\Delta G$ + SeqRec} for protein stability and
sequence recovery. Both tasks require joint threshold satisfaction.
In hard-gate mode,
activity or $\Delta\Delta G$ uses the binary penalty of
Eq.~(\ref{eq:reward}), while the other objective retains its soft gate.
In soft-gate mode, both objectives use soft gates. Success criteria remain
unchanged. This tests guidance when one requirement provides no useful
gradient. Additional DNA, protein, and RNA results appear in
Appendix~\ref{app:additional-tasks}.

\paragraph{Baselines.}
We compare sampling and selection (\textsc{Pretrained}, \textsc{Best-of-N}),
gradient-based steering (\textsc{TDS})~\citep{wu2024practical,nisonoff2024unlocking},
and gradient-free steering (\textsc{SMC}, \textsc{FK-Steering},
\textsc{SVDD})~\citep{singhal2025general,li2024derivative}.
All methods share the frozen prior within each task, and most baselines
have generation times of the same order of magnitude.
Implementation and budget details are in Appendix~\ref{app:baseline-protocol}.

\paragraph{Evaluation Metrics.}
We report joint SR, the fraction of outputs satisfying both thresholds,
and each objective's median over all outputs and over the ten designs with
the highest joint reward. We also report uniqueness (the fraction of distinct
sequences), domain-specific naturalness and validity checks, and generation
time per design. Tabs.~\ref{tab:dna-main} and~\ref{tab:protein-main} report
mean $\pm$ s.d.\ over three seeds. Full metric definitions and thresholds are in
Appendix~\ref{app:evaluation-metrics}.

\subsection{Main Results}
\label{sec:experiments-results}

\begin{table*}[t]
  \centering
  \caption{\textbf{Protein design with hard and soft gates.} $\Delta\Delta G$ + SeqRec results with identical success criteria. The $\Delta\Delta G$ term uses a hard gate (binary penalty) or a soft gate (smooth reward).}
  \label{tab:protein-main}
  \setlength{\tabcolsep}{3.6pt}
  \renewcommand{\arraystretch}{1.13}
  \resizebox{\textwidth}{!}{%
  \begin{tabular}{lIcIccIccJcIcIcIc}
    \dimosOuterRule
    \multirow{2}{*}{\textbf{Method}}
      & \multirow{2}{*}{\textbf{Joint SR (\%) $\uparrow$}}
      & \multicolumn{2}{cI}{\textbf{$\Delta\Delta G$ $\uparrow$}}
      & \multicolumn{2}{cJ}{\textbf{SeqRec $\uparrow$}}
      & \multirow{2}{*}{\textbf{Uniq. (\%)}}
      & \multirow{2}{*}{\textbf{scRMSD (\%)}}
      & \multirow{2}{*}{\textbf{pLDDT (\%)}}
      & \multirow{2}{*}{\textbf{Time (ms)}} \\
      & & \textbf{Median} & \textbf{Top-10}
      & \textbf{Median} & \textbf{Top-10}
      & & & & \\
    \dimosMajorRule
    \rowcolor{dimosTask!70}
    \multicolumn{10}{c}{\color{dimosNavy}\textit{Hard-Gate Mode}} \\
    \baseline{Pretrained}
      & \meanstd{4.04}{0.30}
      & \meanstd{-0.45}{0.00}
      & \meanstd{0.45}{0.03}
      & \meanstd{0.44}{0.00}
      & \meanstd{0.48}{0.00}
      & \meanstd{99.67}{0.05}
      & \meanstd{93.55}{0.11}
      & \meanstd{87.98}{0.11}
      & \meanstd{17.42}{0.08} \\
    \baseline{Best-of-N}
      & \meanstd{16.06}{0.19}
      & \secondresult{\meanstd{-0.02}{0.01}}
      & \meanstd{0.70}{0.05}
      & \meanstd{0.49}{0.00}
      & \meanstd{0.52}{0.00}
      & \meanstd{99.18}{0.03}
      & \meanstd{92.84}{0.09}
      & \meanstd{86.65}{0.16}
      & \meanstd{159.89}{0.05} \\
    \baseline{TDS}
      & \meanstd{13.59}{0.48}
      & \meanstd{-0.12}{0.01}
      & \meanstd{0.38}{0.06}
      & \meanstd{0.47}{0.00}
      & \meanstd{0.50}{0.00}
      & \meanstd{16.75}{0.24}
      & \meanstd{93.12}{1.00}
      & \meanstd{87.96}{0.21}
      & \meanstd{124.49}{0.38} \\
    \baseline{SMC}
      & \meanstd{4.10}{0.30}
      & \meanstd{-0.10}{0.06}
      & \meanstd{0.69}{0.07}
      & \meanstd{0.44}{0.00}
      & \meanstd{0.48}{0.00}
      & \meanstd{22.09}{0.74}
      & \meanstd{92.90}{0.83}
      & \meanstd{89.37}{0.03}
      & \meanstd{83.11}{0.08} \\
    \baseline{FK-Steering}
      & \meanstd{8.03}{0.08}
      & \meanstd{-0.26}{0.01}
      & \meanstd{0.71}{0.03}
      & \meanstd{0.45}{0.00}
      & \meanstd{0.50}{0.00}
      & \meanstd{99.93}{0.00}
      & \meanstd{94.23}{0.16}
      & \meanstd{87.43}{0.19}
      & \meanstd{192.43}{0.12} \\
    \baseline{SVDD}
      & \secondresult{\meanstd{31.68}{0.03}}
      & \firstresult{\meanstd{0.07}{0.01}}
      & \secondresult{\meanstd{0.75}{0.02}}
      & \secondresult{\meanstd{0.54}{0.00}}
      & \secondresult{\meanstd{0.57}{0.00}}
      & \meanstd{98.61}{0.12}
      & \meanstd{93.86}{0.13}
      & \meanstd{84.64}{0.09}
      & \meanstd{166.79}{0.09} \\
    \dimosMinorRule
    \rowcolor{dimosOurs!70}
    \textbf{\methodname\ (Ours)}
      & \firstresult{\meanstd{58.38}{0.36}}
      & \firstresult{\meanstd{0.07}{0.00}}
      & \firstresult{\meanstd{1.38}{0.00}}
      & \firstresult{\meanstd{0.61}{0.00}}
      & \firstresult{\meanstd{0.63}{0.00}}
      & \meanstd{100.00}{0.00}
      & \meanstd{92.32}{0.05}
      & \meanstd{85.92}{0.03}
      & \meanstd{157.63}{0.05} \\
    \dimosMajorRule
    \rowcolor{dimosTask!70}
    \multicolumn{10}{c}{\color{dimosNavy}\textit{Soft-Gate Mode}} \\
    \baseline{Pretrained}
      & \meanstd{4.04}{0.30}
      & \meanstd{-0.45}{0.00}
      & \meanstd{0.45}{0.03}
      & \meanstd{0.44}{0.00}
      & \meanstd{0.48}{0.00}
      & \meanstd{99.67}{0.05}
      & \meanstd{93.55}{0.11}
      & \meanstd{87.98}{0.11}
      & \meanstd{17.38}{0.10} \\
    \baseline{Best-of-N}
      & \meanstd{17.77}{0.19}
      & \meanstd{0.01}{0.01}
      & \meanstd{0.73}{0.04}
      & \meanstd{0.49}{0.00}
      & \meanstd{0.52}{0.00}
      & \meanstd{99.00}{0.08}
      & \meanstd{94.21}{0.05}
      & \meanstd{85.81}{0.05}
      & \meanstd{159.68}{0.13} \\
    \baseline{TDS}
      & \meanstd{9.14}{0.45}
      & \meanstd{-0.31}{0.02}
      & \meanstd{0.41}{0.04}
      & \meanstd{0.47}{0.00}
      & \meanstd{0.49}{0.00}
      & \meanstd{17.34}{0.19}
      & \meanstd{94.88}{0.59}
      & \meanstd{90.67}{0.17}
      & \meanstd{123.78}{0.58} \\
    \baseline{SMC}
      & \meanstd{6.79}{0.43}
      & \meanstd{-0.48}{0.05}
      & \meanstd{0.53}{0.03}
      & \meanstd{0.45}{0.00}
      & \meanstd{0.48}{0.00}
      & \meanstd{22.14}{0.97}
      & \meanstd{92.25}{0.37}
      & \meanstd{86.24}{0.21}
      & \meanstd{82.51}{0.03} \\
    \baseline{FK-Steering}
      & \meanstd{6.38}{0.19}
      & \meanstd{-0.33}{0.00}
      & \meanstd{0.44}{0.01}
      & \meanstd{0.46}{0.00}
      & \meanstd{0.50}{0.00}
      & \meanstd{99.54}{0.00}
      & \meanstd{93.84}{0.11}
      & \meanstd{87.63}{0.11}
      & \meanstd{192.32}{0.06} \\
    \baseline{SVDD}
      & \secondresult{\meanstd{35.66}{0.41}}
      & \secondresult{\meanstd{0.38}{0.01}}
      & \secondresult{\meanstd{1.00}{0.02}}
      & \secondresult{\meanstd{0.53}{0.00}}
      & \secondresult{\meanstd{0.56}{0.00}}
      & \meanstd{96.83}{0.21}
      & \meanstd{96.57}{0.12}
      & \meanstd{83.90}{0.03}
      & \meanstd{166.70}{0.08} \\
    \dimosMinorRule
    \rowcolor{dimosOurs!70}
    \textbf{\methodname\ (Ours)}
      & \firstresult{\meanstd{70.64}{0.24}}
      & \firstresult{\meanstd{0.51}{0.01}}
      & \firstresult{\meanstd{1.39}{0.03}}
      & \firstresult{\meanstd{0.60}{0.00}}
      & \firstresult{\meanstd{0.63}{0.00}}
      & \meanstd{99.93}{0.00}
      & \meanstd{93.99}{0.21}
      & \meanstd{85.61}{0.09}
      & \meanstd{157.51}{0.11} \\
    \dimosOuterRule
  \end{tabular}%
  }
  \par\vspace{-10pt}
\end{table*}

Tabs.~\ref{tab:dna-main} and~\ref{tab:protein-main} summarize DNA and protein
results under both gate modes. Three findings emerge.
\textbf{\ding{182}\enspace Joint Objective Improvement.}
\methodname leads both domains and gate modes in joint SR, including with
a non-differentiable hard gate. Its evaluation-only guidance uses reward
values without smoothing that gate. In soft-gate mode, it nearly doubles
joint SR relative to \textsc{SVDD}, the strongest baseline, while raising
both objective medians, showing distribution-wide gains beyond top-ranked
designs.
\textbf{\ding{183}\enspace Effective Generation-Time Guidance.}
At comparable generation times, \methodname substantially outperforms
\textsc{Best-of-N}, which ranks only completed samples. This supports
integrating reward feedback into generation rather than relying solely on
final selection to find jointly feasible designs within a given budget.
\textbf{\ding{184}\enspace Diverse High-Quality Outputs.}
\methodname retains $100\%$ uniqueness on DNA and over $99.9\%$ on protein
under both gate modes. In DNA hard-gate mode, distinct successful designs
account for $87.55\%$ of its outputs, versus at most $48.12\%$ for
\textsc{TDS}, bounded by its uniqueness. This is consistent with restricting
branch replication to independent generation groups
(Sec.~\ref{sec:method-search}).
\methodname retains high DNA motif-profile correlations and over $92\%$
protein fold recovery (scRMSD), providing quality checks beyond the
optimized objectives.

% Defer the diagnostic figures to the next page, after the main results.
\suppressfloats[t]
\begin{figure}[t]
  \centering
  \includegraphics[width=\textwidth]{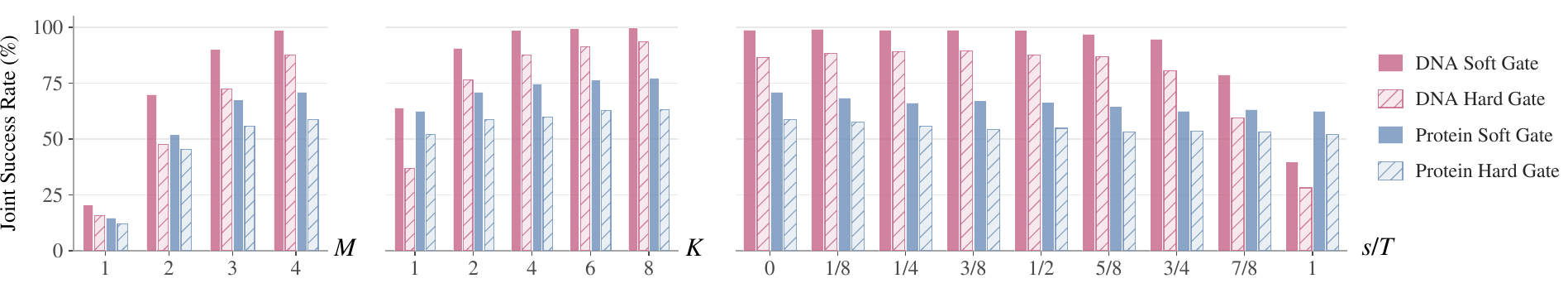}
  \captionsetup{skip=3pt}
  \caption{\textbf{Sensitivity to $M$, $K$, and $s/T$.}
  One parameter varies at a time.}
  \label{fig:dna-parameters}
\end{figure}

% Ablation table + budget envelope, one top-of-page float.
% Generated by _ablation_table_latex.py -- do not edit by hand.
% One float, two panels: the component ablation (left) and the budget envelope (right).
% Top-align the complete panels: table caption above the table, figure caption below
% the plot. The table caption must not reserve a blank row above the right panel.
\begin{figure}[!t]
  \centering
  \begin{minipage}[t]{0.52\textwidth}
    \vspace{0pt}
    \centering
    \captionsetup{position=top,skip=5pt}
    \captionof{table}{\textbf{Component ablation.} Each row removes one component;
    entries are the change in joint SR (\%) from the full configuration, over three seeds.}
    \label{tab:ablation}
    \setlength{\tabcolsep}{1.2pt}%
    \renewcommand{\arraystretch}{1.50}%
    \resizebox{\linewidth}{!}{%
    \begin{tabular}{lIccJcc}
      \dimosOuterRule
      \multirow{2}{*}{\textbf{Configuration}}
        & \multicolumn{2}{cJ}{\textbf{DNA}}
        & \multicolumn{2}{c}{\textbf{Protein}} \\
        & \textbf{Hard Gate} & \textbf{Soft Gate} & \textbf{Hard Gate} & \textbf{Soft Gate} \\
      \dimosMajorRule
      \rowcolor{dimosOurs!60}
      Full \methodname
        & \meanstd{87.55}{0.65}
        & \meanstd{98.39}{0.29}
        & \meanstd{58.38}{0.36}
        & \meanstd{70.64}{0.24} \\
      \dimosMinorRule
      \rowcolor{dimosTask!70}
      \multicolumn{5}{c}{\color{dimosNavy}\textit{Doob-guided local resampling}} \\
      \;\;-- Doob guidance
        & \meanstd{-62.92}{1.84}
        & \meanstd{-64.95}{0.46}
        & \meanstd{-46.96}{0.24}
        & \meanstd{-56.18}{0.35} \\
      \;\;-- enumeration
        & \meanstd{-12.66}{1.72}
        & \meanstd{-5.26}{0.22}
        & \meanstd{-38.28}{0.08}
        & \meanstd{-47.61}{0.08} \\
      \rowcolor{dimosTask!70}
      \multicolumn{5}{c}{\color{dimosNavy}\textit{Budget-efficient trajectory search}} \\
      \;\;-- stage-wise branching
        & \meanstd{-51.61}{2.03}
        & \meanstd{-34.69}{3.29}
        & \meanstd{-6.47}{0.19}
        & \meanstd{-7.75}{0.23} \\
      \;\;-- reward-guided refinement
        & \meanstd{-18.39}{1.63}
        & \meanstd{-7.03}{1.10}
        & \meanstd{-0.11}{0.27}
        & \meanstd{-1.80}{0.08} \\
      \dimosOuterRule
    \end{tabular}}%
  \end{minipage}\hfill
  \begin{minipage}[t]{0.46\textwidth}
    \vspace{0pt}
    \centering
    \includegraphics[width=\linewidth]{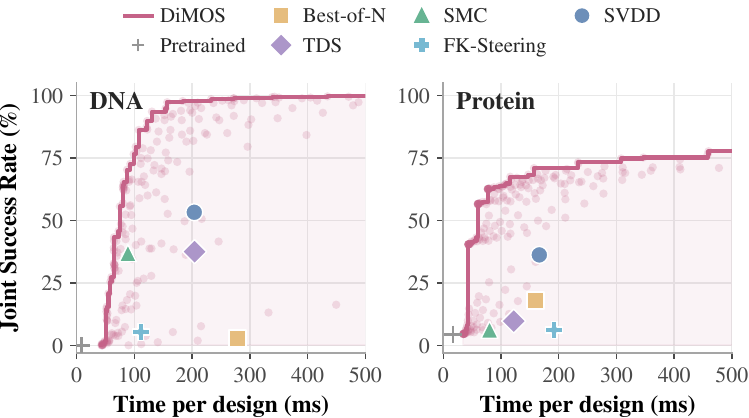}
    \caption{\textbf{Joint success versus generation time.}
    \methodname budget envelopes on Activity + ATAC (DNA) and
    $\Delta\Delta G$ + SeqRec (protein).}
    \label{fig:dna-budget}
  \end{minipage}
  \par\vspace{-18pt}
\end{figure}

\subsection{Hyperparameter Analysis and Ablation}
\label{sec:experiments-allocation}

\paragraph{Hyperparameter Sensitivity.}
Fig.~\ref{fig:dna-parameters} examines candidate width $M$, branch width $K$,
and normalized search onset $s/T$ on DNA and protein. Increasing $M$ and $K$
generally improves joint SR in both gate modes, although gains diminish
as more branches are added. Search onset trades prefix sharing against
multi-trajectory exploration.
DNA maintains high success when branching is delayed through roughly the
first half of denoising. A later split propagates one shared trajectory
instead of $K$ branches for more steps, allowing DNA to save computation on
the prefix with little loss in success. Protein benefits from earlier
branching, favoring more computation on alternative continuations rather
than a long shared prefix. The allocation should therefore reflect the task.

\paragraph{Component Contributions.}
Tab.~\ref{tab:ablation} tests what each component adds beyond the remaining
algorithm.
\textbf{\ding{182}\enspace Local Reward Feedback.}
Removing Doob guidance disables the reward term in local resampling but
retains candidate priors and reward-based branch selection. This causes the
largest drop in joint SR, showing that trajectory-level selection alone
cannot replace reward-informed token updates.
\textbf{\ding{183}\enspace Candidate Construction.}
Replacing enumeration with $M$ i.i.d. token draws at the same selected
position allows repeats while keeping the candidate count and reweighting
rule unchanged. The larger losses on protein highlight the value of
evaluating distinct alternatives, not merely more candidates.
\textbf{\ding{184}\enspace Multiple Continuations.}
Setting $K=1$ retains local guidance but collapses search to a single
trajectory. The larger losses on DNA highlight the value of exploring
multiple continuations rather than committing to one guided path.
\textbf{\ding{185}\enspace Intermediate Selection.}
Disabling refinement retains the $K$ branches and final best-of-$K$
selection but removes intermediate pruning and replication. The losses,
especially on DNA, show that multiple trajectories benefit from redirecting
computation toward promising partial designs during search.
App.~\ref{app:search-transfer} transplants this search layer onto the SVDD
kernel at matched wall clock: it improves that kernel too, but closes under a
fifth of the gap to \methodname, leaving the local transition as the larger
factor.

\subsection{Efficiency under Time Budgets}
\label{sec:experiments-efficiency}

Fig.~\ref{fig:dna-budget} plots an empirical Pareto frontier between
generation time and joint SR from single-seed, soft-gate sweeps. The
staircase gives the highest observed joint SR within each time budget,
revealing two benefits.
\textbf{\ding{182}\enspace Higher Success at a Given Budget.}
At the time budgets of the compared reward-based baselines, \methodname
achieves higher joint SR in both domains. It therefore uses the available
generation budget to produce a higher proportion of jointly feasible designs.
\textbf{\ding{183}\enspace Lower Cost at a Given Success Rate.}
Measured \methodname configurations match or exceed \textsc{SVDD}'s joint
SR with approximately $63\%$ less generation time on DNA and $74\%$ less on
protein. Practitioners can therefore choose higher success within a fixed
budget or lower cost for a target success rate.
\WFclear

\section{Conclusion}
\label{sec:conclusion}

We introduced \methodname, a training-free framework for multi-objective
scientific design with frozen masked diffusion models. It combines
approximate Doob-guided local resampling with budget-efficient trajectory
search, accommodating hard constraints and black-box rewards without
gradients. Experiments across DNA, protein, and RNA show improved joint
success while retaining diverse designs. Allocation studies further highlight
task-dependent trade-offs between candidate width, branching, and search
duration.

% \ifdefined\DiMOSArxiv
%   \section*{Acknowledgements}
%   \input{sec/acknowledgement}
% \else
%   \input{sec/ai_use_statement}
% \fi

\bibliography{iclr2027_conference}
\bibliographystyle{iclr2027_conference}

\clearpage
\appendix
% \section{Limitations}
% \label{sec:limitation}

% \methodname has the following five limitations.

% \begin{itemize}[leftmargin=*,topsep=1pt,itemsep=4pt,parsep=0pt]
%     \item xxx
%     \item xxx
% \end{itemize}

\section{Proof of Proposition~\ref{prop:doob_transform}}
\label{app:doob_proof}

\begin{proof}
By the Markov property of the original reverse process and the definition
of $h_\ell$, for any state $x$ at time $t_\ell$,
\begin{align}
h_\ell(x)
&=
\mathbb E_{\mathbb P_\theta}
\left[
    \phi(X_0)
    \mid
    X_{t_\ell}=x
\right]
\nonumber\\
&=
\mathbb E_{\mathbb P_\theta}
\left[
    h_{\ell-1}(X_{t_{\ell-1}})
    \mid
    X_{t_\ell}=x
\right]
\nonumber\\
&=
\sum_y
P_{\theta,t_\ell}(y\mid x)
h_{\ell-1}(y).
\label{eq:appendix_h_recursion}
\end{align}
Since $\phi$ is strictly positive, so is $h_\ell$ on the states reached by
the reverse sampler. Therefore,
\begin{align}
\sum_y P_{\theta,t_\ell}^h(y\mid x)
&=
\frac{1}{h_\ell(x)}
\sum_y
P_{\theta,t_\ell}(y\mid x)
h_{\ell-1}(y)
=1.
\end{align}
Moreover,
$P_{\theta,t_\ell}^h(y\mid x)\geq0$, so
$P_{\theta,t_\ell}^h(\cdot\mid x)$ is a valid transition kernel.

Let $\mathbb P_\theta^h$ denote the path law induced by the transformed
kernels. For a trajectory
$x_L,x_{L-1},\ldots,x_0$ starting from the fixed state $x_L$,
\begin{align}
&\mathbb P_\theta^h
\left(
    X_{t_{L-1}}=x_{L-1},\ldots,X_{t_0}=x_0
    \mid
    X_{t_L}=x_L
\right)
\nonumber\\
&\qquad=
\prod_{\ell=1}^L
P_{\theta,t_\ell}^h(x_{\ell-1}\mid x_\ell)
\nonumber\\
&\qquad=
\prod_{\ell=1}^L
P_{\theta,t_\ell}(x_{\ell-1}\mid x_\ell)
\frac{h_{\ell-1}(x_{\ell-1})}{h_\ell(x_\ell)}
\nonumber\\
&\qquad=
\left[
    \prod_{\ell=1}^L
    P_{\theta,t_\ell}(x_{\ell-1}\mid x_\ell)
\right]
\frac{\phi(x_0)}{h_L(x_L)},
\label{eq:appendix_path_reweighting}
\end{align}
where the $h$-ratios telescope and $h_0(x_0)=\phi(x_0)$.

Summing over $x_1,\ldots,x_{L-1}$ gives
\begin{align}
\mathbb P_\theta^h
\left(
    X_{t_0}=x_0
    \mid
    X_{t_L}=x_L
\right)
&=
\frac{\phi(x_0)}{h_L(x_L)}
\mathbb P_\theta
\left(
    X_{t_0}=x_0
    \mid
    X_{t_L}=x_L
\right)
\nonumber\\
&=
\frac{
    p_\theta(x_0)\phi(x_0)
}{
    h_L(x_L)
}.
\label{eq:appendix_terminal_intermediate}
\end{align}
By definition of the $h$-function and because the original sampler starts
from the fixed state $x_L$,
\begin{align}
h_L(x_L)
&=
\mathbb E_{\mathbb P_\theta}
\left[
    \phi(X_0)
    \mid
    X_{t_L}=x_L
\right]
\nonumber\\
&=
\sum_{x_0}
p_\theta(x_0)\phi(x_0)
=
\mathbb E_{X_0\sim p_\theta}
[\phi(X_0)].
\end{align}
Substituting this into
Eq.~\eqref{eq:appendix_terminal_intermediate} yields
\[
\mathbb P_\theta^h
\left(
    X_{t_0}=x_0
    \mid
    X_{t_L}=x_L
\right)
=
\frac{
    p_\theta(x_0)\phi(x_0)
}{
    \mathbb E_{X_0\sim p_\theta}[\phi(X_0)]
}
=
\pi_\phi(x_0).
\]
Hence the transformed chain has terminal marginal $\pi_\phi$.
\end{proof}

\clearpage
\section{Proof of Theorem~\ref{thm:coordinate_doob_tv}}\label{app:coordinate_doob_tv}

\begin{proof}
Fix $x=x_{t_\ell}$ and suppress the dependence on $x$ and $t_\ell$
when there is no ambiguity.
Let $A=Z^J$ denote the token at the selected coordinate $J$.
For a context $c=(e,b)$ and token $a\in\mathcal V$, define
$y_x(c,a)$ as the unique successor state whose reveal set is $e$,
whose token at $J$ is $a$, and whose other newly revealed tokens are
given by $b$; all remaining coordinates agree with $x$.
Define the corresponding base conditional by
\[
    p_\ell(a\mid c,x)
    =
    \mathbb{P}(A=a\mid C=c,X_{t_\ell}=x).
\]

For fixed $x$, the correspondence between the successor $Z$ and the pair
$(C,A)$ is one-to-one: $Z$ determines its reveal set $E$, the selected
coordinate $J$, and hence $(C,A)$, while $(C,A)$ reconstructs $Z$ as
$y_x(C,A)$. Therefore,
\begin{equation}
    P_{\theta,t_\ell}\!\left(y_x(c,a)\mid x\right)
    =
    \nu_\ell(c\mid x)\,
    p_\ell(a\mid c,x).
    \label{eq:proof_base_factorization}
\end{equation}

By the definition of the context value,
\[
    \Gamma_\ell(c;x)
    =
    \sum_{a\in\mathcal V}
    p_\ell(a\mid c,x)\,
    h_{\ell-1}\!\left(y_x(c,a)\right).
\]
Therefore, using the backward recursion for the Doob value,
\begin{align}
    h_\ell(x)
    &=
    \sum_y
    P_{\theta,t_\ell}(y\mid x)\,
    h_{\ell-1}(y)
    \notag\\
    &=
    \sum_c
    \nu_\ell(c\mid x)
    \sum_{a\in\mathcal V}
    p_\ell(a\mid c,x)\,
    h_{\ell-1}\!\left(y_x(c,a)\right)
    \notag\\
    &=
    \sum_c
    \nu_\ell(c\mid x)\,
    \Gamma_\ell(c;x)
    =
    \mathbb E_{\nu_\ell}
    \bigl[\Gamma_\ell(C;x)\bigr].
    \label{eq:proof_context_mean}
\end{align}

We next characterize the exact Doob transition.
From Eq.~\eqref{eq:proof_base_factorization},
\begin{align}
    P_{\theta,t_\ell}^{h}
    \!\left(y_x(c,a)\mid x\right)
    &=
    P_{\theta,t_\ell}
    \!\left(y_x(c,a)\mid x\right)
    \frac{
        h_{\ell-1}(y_x(c,a))
    }{
        h_\ell(x)
    }
    \notag\\
    &=
    \nu_\ell(c\mid x)
    \frac{\Gamma_\ell(c;x)}{h_\ell(x)}
    p_\ell^h(a\mid c,x),
    \label{eq:proof_full_doob_factorization}
\end{align}
where
\begin{equation}
    p_\ell^h(a\mid c,x)
    =
    \frac{
        p_\ell(a\mid c,x)
        h_{\ell-1}(y_x(c,a))
    }{
        \Gamma_\ell(c;x)
    }.
    \label{eq:proof_exact_conditional}
\end{equation}
Thus the exact Doob transition changes both the conditional law of the
token at $J$ and the marginal law of the shared context. In particular,
the latter becomes
\begin{equation}
    \nu_\ell^h(c\mid x)
    =
    \nu_\ell(c\mid x)
    \frac{\Gamma_\ell(c;x)}{h_\ell(x)}.
    \label{eq:proof_doob_context}
\end{equation}

To separate the three approximation sources, introduce two intermediate
coordinate-wise kernels.
First, let $\widetilde P_{\theta,t_\ell}^{h}$ keep
$C\sim\nu_\ell(\cdot\mid x)$ unchanged and sample the token at $J$
from the exact conditional $p_\ell^h$:
\begin{equation}
    \widetilde P_{\theta,t_\ell}^{h}
    \!\left(y_x(c,a)\mid x\right)
    =
    \nu_\ell(c\mid x)\,
    p_\ell^h(a\mid c,x).
    \label{eq:proof_oracle_coordinate_kernel}
\end{equation}
Second, let $\widetilde P_{\theta,t_\ell}^{h,(M)}$ additionally restrict
the token to the enumerated set $S_M(c)$:
\begin{equation}
    \widetilde P_{\theta,t_\ell}^{h,(M)}
    \!\left(y_x(c,a)\mid x\right)
    =
    \nu_\ell(c\mid x)\,
    p_{\ell,M}^h(a\mid c,x).
    \label{eq:proof_truncated_kernel}
\end{equation}
Finally, by definition of the practical resampling rule,
\begin{equation}
    \widehat P_{\theta,t_\ell}^{(M)}
    \!\left(y_x(c,a)\mid x\right)
    =
    \nu_\ell(c\mid x)\,
    \widehat p_{\ell,M}(a\mid c,x),
    \label{eq:proof_practical_kernel}
\end{equation}
where $\widehat p_{\ell,M}$ includes the completion-based value
approximation, averaged over any completion randomness.

We now bound the discrepancy between consecutive kernels.

\paragraph{Context-Reweighting Error.}
The kernels $\widetilde P_{\theta,t_\ell}^{h}$ and
$P_{\theta,t_\ell}^{h}$ have the same conditional law
$p_\ell^h(\cdot\mid c,x)$ given $C=c$ and differ only in their context
marginals. Using Eqs.~\eqref{eq:proof_full_doob_factorization}
and~\eqref{eq:proof_oracle_coordinate_kernel},
\begin{align}
&\operatorname{TV}\!\left(
    \widetilde P_{\theta,t_\ell}^{h}(\cdot\mid x),
    P_{\theta,t_\ell}^{h}(\cdot\mid x)
\right)
\notag\\
&\quad=
\frac12
\sum_{c,a}
\nu_\ell(c\mid x)
\left|
    1-\frac{\Gamma_\ell(c;x)}{h_\ell(x)}
\right|
p_\ell^h(a\mid c,x)
\notag\\
&\quad=
\frac12
\mathbb E_{\nu_\ell}\!\left[
    \left|
        \frac{\Gamma_\ell(C;x)}{h_\ell(x)}-1
    \right|
\right].
\label{eq:proof_context_error}
\end{align}

\paragraph{Top-$M$ Truncation Error.}
For each context $c$, $p_{\ell,M}^h$ is obtained by restricting
$p_\ell^h$ to $S_M(c)$ and renormalizing. Hence
\begin{equation}
    \operatorname{TV}\!\left(
        p_{\ell,M}^h(\cdot\mid c,x),
        p_\ell^h(\cdot\mid c,x)
    \right)
    =
    p_\ell^h(S_M(c)^c\mid c,x).
    \label{eq:proof_truncation_identity}
\end{equation}
Indeed, writing
$q_c=p_\ell^h(S_M(c)\mid c,x)$,
\begin{align*}
&\frac12
\sum_{a\in\mathcal V}
\left|
    p_{\ell,M}^h(a\mid c,x)
    -
    p_\ell^h(a\mid c,x)
\right|
\\
&\quad=
\frac12
\left[
    \sum_{a\in S_M(c)}
    p_\ell^h(a\mid c,x)
    \left(\frac1{q_c}-1\right)
    +
    \sum_{a\notin S_M(c)}
    p_\ell^h(a\mid c,x)
\right]
\\
&\quad=
\frac12\bigl[(1-q_c)+(1-q_c)\bigr]
=
1-q_c.
\end{align*}
Since the two full-state kernels share the same context law
$\nu_\ell$, Eq.~\eqref{eq:proof_truncation_identity} gives
\begin{align}
&\operatorname{TV}\!\left(
    \widetilde P_{\theta,t_\ell}^{h,(M)}(\cdot\mid x),
    \widetilde P_{\theta,t_\ell}^{h}(\cdot\mid x)
\right)
\notag\\
&\quad=
\mathbb E_{\nu_\ell}\!\left[
    p_\ell^h(S_M(C)^c\mid C,x)
\right].
\label{eq:proof_truncation_error}
\end{align}

\paragraph{Value-Estimation Error.}
The practical and truncated-oracle kernels also share the same context
law $\nu_\ell$. Therefore,
\begin{align}
&\operatorname{TV}\!\left(
    \widehat P_{\theta,t_\ell}^{(M)}(\cdot\mid x),
    \widetilde P_{\theta,t_\ell}^{h,(M)}(\cdot\mid x)
\right)
\notag\\
&\quad=
\frac12
\sum_c\nu_\ell(c\mid x)
\sum_{a\in\mathcal V}
\left|
    \widehat p_{\ell,M}(a\mid c,x)
    -
    p_{\ell,M}^h(a\mid c,x)
\right|
\notag\\
&\quad=
\mathbb E_{\nu_\ell}\!\left[
    \operatorname{TV}\!\left(
        \widehat p_{\ell,M}(\cdot\mid C,x),
        p_{\ell,M}^h(\cdot\mid C,x)
    \right)
\right].
\label{eq:proof_value_error}
\end{align}

Applying the triangle inequality along
\[
    \widehat P_{\theta,t_\ell}^{(M)}
    \;\longrightarrow\;
    \widetilde P_{\theta,t_\ell}^{h,(M)}
    \;\longrightarrow\;
    \widetilde P_{\theta,t_\ell}^{h}
    \;\longrightarrow\;
    P_{\theta,t_\ell}^{h}
\]
and substituting
Eqs.~\eqref{eq:proof_context_error},
\eqref{eq:proof_truncation_error}, and
\eqref{eq:proof_value_error}
yields Eq.~\eqref{eq:coordinate_doob_tv}.

Finally, Eq.~\eqref{eq:proof_context_mean} implies
$\mathbb E_{\nu_\ell}[\Gamma_\ell(C;x)]=h_\ell(x)$.
By Cauchy--Schwarz,
\begin{align}
\frac12
\mathbb E_{\nu_\ell}\!\left[
    \left|
        \frac{\Gamma_\ell(C;x)}{h_\ell(x)}-1
    \right|
\right]
&=
\frac{
    \mathbb E_{\nu_\ell}
    [|\Gamma_\ell(C;x)-h_\ell(x)|]
}{
    2h_\ell(x)
}
\notag\\
&\leq
\frac{
    \sqrt{
        \operatorname{Var}_{\nu_\ell}
        [\Gamma_\ell(C;x)]
    }
}{
    2h_\ell(x)
}.
\end{align}
This proves the claimed bound.
\end{proof}

The branch-level pruning of Sec.~\ref{sec:method-search} sits outside this
bound: it serves search rather than exact sampling from the
terminal-reweighted distribution.

\section{Empirical Evaluation of Local Approximation Error}
\label{app:local-error}

We evaluate the single-step approximation in
Theorem~\ref{thm:coordinate_doob_tv} on DNA Activity + ATAC, protein
$\Delta\Delta G$ + sequence recovery, and RNA MRL + accessibility.
For each domain and gate mode, we collect $24$ partially generated
sequences from three seeds and eight generation groups per seed.
Each group contributes its first branch when three MASK positions remain,
using the main-table configurations. Protein samples are balanced across
the first six held-out backbones reaching this count.

This choice makes the reference calculation tractable. Exact $h$-values
average terminal exponential rewards over all remaining base-process
continuations, whose number grows rapidly with the number of MASK
positions. Earlier in decoding, Monte Carlo estimation is more affordable
but introduces sampling uncertainty, especially when a few continuations
dominate the reward weights. Using three MASK positions permits exact
enumeration across all three domains, isolating the sampler's approximation
error from Monte Carlo reference error at a common near-terminal setting.

We compute the reference Doob transition by enumerating the remaining
base-process continuations and compare it with the practical local
resampling kernel. The reference includes all token classes, while
practical menus and completion rules remain unchanged. Completion
randomness is integrated where applicable, so the reported TV measures
the full single-step discrepancy rather than only the context term.
Production-code replays verify the practical resampling probabilities.

% Source: DOIT_Multi/experiments/THEORY_UNIFORM24_20260925/report/summary.csv
\begin{table}[htbp]
  \centering
  \caption{\textbf{Local approximation error across domains.}
  Each row summarizes $24$ partially generated sequences with three
  remaining MASK positions.}
  \label{tab:local-error}
  \small
  \setlength{\tabcolsep}{8pt}
  \renewcommand{\arraystretch}{1.15}
  \begin{tabular}{llIcrr}
    \dimosOuterRule
    \textbf{Domain} & \textbf{Gate} & \textbf{Samples}
      & \textbf{Median TV} & \textbf{Mean TV} \\
    \dimosMajorRule
    DNA & Soft & 24 & 0.00041 & 0.00286 \\
    DNA & Hard & 24 & 0.00012 & 0.00968 \\
    \dimosMinorRule
    Protein & Soft & 24 & 0.00073 & 0.09231 \\
    Protein & Hard & 24 & 0.00031 & 0.03408 \\
    \dimosMinorRule
    RNA & Soft & 24 & 0.02280 & 0.03571 \\
    RNA & Hard & 24 & 0.00123 & 0.01503 \\
    \dimosOuterRule
  \end{tabular}
\end{table}

Tab.~\ref{tab:local-error} shows median TV below $0.001$ for DNA and
protein under both gate modes and below $0.023$ for RNA, indicating close
agreement for typical sampled updates in this near-terminal regime.

\section{Why the Hard-Gate Penalty Is Finite}
\label{app:hard-gate}

The log-space aggregation in Eq.~(\ref{eq:reward}) gives the terminal weight
\begin{equation}
    \exp\bigl(R(x)/\beta\bigr)
    =
    \prod_{k=1}^{n}
    \Bigl[
        \exp\bigl(g_k(d_k(x))\bigr)
    \Bigr]^{w_k/\beta}.
    \label{eq:reward_product}
\end{equation}
A soft gate contributes $\sigma(d_k(x)/\tau_k)^{w_k/\beta}$, while a hard
gate contributes $1$ when satisfied and $e^{-Pw_k/\beta}$ otherwise. Thus a
finite hard penalty discourages violations without excluding them from the
support. Two properties of this construction shape the sampler.

\paragraph{A Constant Penalty Cancels.}
When all candidate completions violate the same hard-gated requirement,
its common contribution $-Pw_k/\beta$ cancels during local normalization.
Relative weights then depend on the prior and the remaining requirements.
A hard gate distinguishes candidates on opposite sides of its threshold,
but provides no ranking on either side. A soft gate additionally
distinguishes their margins.

\paragraph{The Penalty Is Finite.}
A finite penalty keeps local resampling well-defined when all candidate
completions violate at least one hard-gated requirement, allowing the
remaining reward feedback to guide selection. In the limit $P\to\infty$,
these candidates receive zero weight and the local weights cannot be
normalized. The
strength of violation suppression depends on $Pw_k/\beta$ relative to the
other contributions to the log weight, rather than on $P$ alone.

\section{Algorithm and Implementation Details}
\label{app:algorithm}

In this section, we detail the implementation of \methodname. We first
specify the sampling schedule and transition notation, then present the
full algorithm and explain how completions are used for local guidance
and branch scoring.

\subsection{Sampling Schedule and Transition Notation}
\label{app:sampling_schedule}

Algorithm~\ref{alg:dimos} describes local enumeration and branch search
for the main-text experiments. We index denoising steps by $i=0,\ldots,L-1$,
so branching occurs before the first update when $s=0$. The corresponding noise index is
$\ell=L-i$. Token enumeration is enabled from the first guided update,
independently of the split step. The configured
guidance window is $\mathcal{G}$; branch-selection checkpoints are
$\mathcal{C}=\{i:s<i<L-1,\ (i-s)\bmod f=0\}$, where $f$ is the search
interval. The final update is unguided in the default configurations.

\textsc{Reveal} returns a shared base successor $z$ and its newly revealed
positions $E$: it chooses a count using the configured reveal schedule,
selects that many remaining masked positions uniformly, and samples their
tokens. Previously committed tokens stay fixed. We write $q_i^j$ for the
actual conditional token distribution after proposal tempering, and $Q_i$
for the corresponding IID transition. For enumeration, all candidates share
$z$ except at the selected position, so only the factor $q_i^j(a_m)$ varies
in their prior weights. When $E$ is empty, the code
uses IID candidates from $Q_i$ and reward-only weights, without multiplying
their proposal probability a second time.

\begin{table}[t]
  \centering
  \caption{\textbf{Sampler configuration of the reported \methodname\ runs}, one row per task.
  $T$ is the number of executed denoising steps, $M$ the candidate width, $K$ the branch width,
  $s$ the split step, $\beta$ the tilt temperature of Eq.~(\ref{eq:doob_value}), and $\eta$ the
  proposal temperature.}
  \label{tab:sampler-config}
  \small
  \setlength{\tabcolsep}{8pt}
  \begin{tabular}{llcccccc}
    \toprule
    \textbf{Domain} & \textbf{Task} & $T$ & $M$ & $K$ & $s$ & $\beta$ & $\eta$ \\
    \midrule
    \multirow{2}{*}{DNA}     & ATAC          & $128$ & $4$ & $4$ & $64$ & $0.25$ & $3.0$  \\
                             & MaxGap        & $128$ & $4$ & $8$ & $64$ & $0.25$ & $3.0$  \\
    \cmidrule(lr){1-8}
    \multirow{2}{*}{Protein} & SeqRec        & $49$  & $4$ & $2$ & $0$  & $0.02$ & $1.0$  \\
                             & GRAVY         & $49$  & $2$ & $2$ & $0$  & $0.02$ & $1.0$  \\
    \cmidrule(lr){1-8}
    \multirow{2}{*}{RNA}     & Accessibility & $49$  & $4$ & $2$ & $18$ & $0.04$ & $0.05$ \\
                             & MFE           & $49$  & $4$ & $2$ & $18$ & $0.04$ & $1.0$  \\
    \bottomrule
  \end{tabular}
\end{table}

\subsection{DIMOS Algorithm}
\label{app:dimos_algorithm}

\begin{algorithm}[ht]
\caption{\methodname with local token enumeration and branch search}
\label{alg:dimos}
\small
\begin{algorithmic}[1]
\Require frozen denoiser $\mu_{\theta}$; joint reward $R$;
$B$ lineages, $L$ steps, $M$ tokens, $K$ branches; split $s$;
guidance steps $\mathcal{G}$, search checkpoints $\mathcal{C}$;
temperatures $\eta,\beta$
\Ensure $B$ clean designs
\State initialize one fully masked state in each of $B$ independent lineages
\For{$i=0,\dots,L-1$}
  \State $\ell\gets L-i$
  \If{$i=s$ \textbf{and} $K>1$}
    \State clone each lineage's state into $K$ branches
  \EndIf
  \ForAll{active states $x$ \textbf{in parallel}}
    \If{$i\in\mathcal{G}$ \textbf{and} $x$ contains masks}
      \State compute tempered token conditionals $q_i$ from
             $\mu_{\theta}(x,t_\ell)$ using $\eta$
      \State $(z,E)\gets\textsc{Reveal}(x,q_i,i)$
      \If{$E\ne\varnothing$} \Comment{enumerated menu}
        \State $j\gets\arg\max_{u\in E} H(q_i^u)$
        \State $(a_1,\ldots,a_M)\gets\operatorname{TopM}(q_i^j)$
               \Comment{distinct tokens}
        \State $y^{(m)}\gets z^{j\leftarrow a_m}$,
               $b_m\gets\log q_i^j(a_m)$ for each $m$
      \Else \Comment{empty-reveal fallback}
        \State draw $y^{(1)},\ldots,y^{(M)}$ independently from
               $Q_i(\cdot\mid x)$; $b_m\gets0$
      \EndIf
      \State $\hat{x}^{(m)}_0\gets\textsc{Complete}(y^{(m)},t_{\ell-1})$;
             $R^{(m)}\gets R(\hat{x}^{(m)}_0)$
      \State $\log\widehat w_m\gets b_m+R^{(m)}/\beta$
      \State $m^\star\sim\operatorname{Categorical}
             (\operatorname{softmax}(\log\widehat w_{1:M}))$;
             $x\gets y^{(m^\star)}$
    \Else
      \State advance $x$ with the unguided reverse update
    \EndIf
  \EndFor
  \If{$K>1$ \textbf{and} $i\in\mathcal{C}$}
    \State compute each branch's reward as
           $R(\textsc{Complete}_{\mathrm{argmax}}(x))$
    \State within each lineage, copy the highest-reward branch
           into all $K$ slots
  \EndIf
\EndFor
\State apply the configured final denoising to any residual masks
\State \Return the highest-reward clean branch from each lineage
\end{algorithmic}
\end{algorithm}

\subsection{Completion and Branch Reward Estimation}
\label{app:completion_details}

\textsc{Complete} preserves committed tokens and fills only remaining masks
using the configured completion rule: stochastic completion with shared
random numbers across a parent's candidates for DNA, and argmax completion
for protein/RNA. Denoiser outputs are reused for branch reward estimation where
available; a reward evaluation is reused only when the completion is the
same. Branch selection uses an interval of $f=1$.

\section{Evaluation Criteria}
\label{app:task-thresholds}
\label{app:evaluation-metrics}

In this section, we define the task-specific success thresholds and the
metrics used to assess joint feasibility, design quality, diversity,
validity, and efficiency.

\subsection{Task Success Thresholds}

Tab.~\ref{tab:task-thresholds} lists the final evaluation criteria,
shared across methods and gate modes within each task.

\begin{table}[ht]
  \centering
  \caption{\textbf{Canonical joint-success criteria for the six design tasks.} Both
  objective conditions must be satisfied.}
  \label{tab:task-thresholds}
  \small
  \setlength{\tabcolsep}{5.5pt}
  \renewcommand{\arraystretch}{1.05}
  \begin{tabular}{lIll}
    \dimosOuterRule
    \textbf{Task} & \textbf{Objective 1} & \textbf{Objective 2} \\
    \dimosMajorRule
    Activity + ATAC
      & Activity $\geq 4.0$ & ATAC $\geq 0.7$ \\
    Activity + MaxGap
      & Activity $\geq 4.0$ & MaxGap $\geq 1.0$ \\
    \dimosSectionRule
    ddG + SeqRec
      & ddG $\geq 0$ & SeqRec $\geq 0.55$ \\
    ddG + GRAVY
      & ddG $\geq 0$ & GRAVY $\leq -0.70$ \\
    \dimosSectionRule
    MRL + Opening dG
      & Opening dG $\leq 0.25$ kcal/mol & Optimus-MRL $\geq 0.45$ \\
    MRL + MFE
      & MFE $\leq -16$ kcal/mol & Optimus-MRL $\geq 0.45$ \\
    \dimosOuterRule
  \end{tabular}
\end{table}

\subsection{Metric Definitions}

\textbf{\ding{182} Joint Success Rate (SR)} is the primary metric: the
fraction of returned designs satisfying both canonical objective thresholds,
rather than optimizing one objective at the expense of the other.
For RNA, joint SR uses the held-out Optimus-MRL reward
model~\citep{sample2019human} together with the task-specific structural
criterion, while generation is guided by ConvGRU-MRL.

\textbf{\ding{183} Objective Quality} reports each objective's median over
all outputs and its median among the ten designs with the highest joint reward,
capturing both distribution-wide steering and best-design quality. Top-10
designs are ranked by the joint reward rather than either objective alone.

\textbf{\ding{184} Diversity and Validity} include exact-sequence
uniqueness, DNA naturalness, and protein fold recovery.
Uniqueness is the percentage of distinct exact sequences.
DNA validity uses 3-mer and JASPAR 2022 motif-profile correlations
\citep{castromondragon2022jaspar} to the top-$0.1\%$ HepG2 reference set.
Protein validity uses held-out ESMFold
\citep{lin2023evolutionary}, reporting the fractions with scRMSD $<2$\AA{}
and pLDDT $>80$. These checks
expose particle collapse and reward hacking beyond the guidance reward models.

\textbf{\ding{185} Efficiency} is generation time per returned design,
reported in milliseconds and not ranked. Within any single table all timings
come from one measurement batch
run with a single process per GPU and with kernel compilation disabled, so
that wall clock is not distorted by contention and so that a speed setting
never changes a reported result. For \textsc{Best-of-N}, we select the tested
candidate count whose measured generation time is closest to \methodname's
within each task and gate mode.

All main-table entries are mean $\pm$ population standard deviation over
three independent seeds.

\section{Baseline and Budget Protocol}
\label{app:baseline-protocol}

We use the pretrained DNA and structure-conditioned protein generators
released with DRAKES~\citep{wang2024finetuning}. DNA designs are 200-bp
sequences in the GOSAI setting~\citep{gosai2023machine}. Protein generation
is conditioned on 12 held-out backbones.

\textsc{Pretrained} samples directly from the frozen prior.
\textsc{Best-of-N} draws independent samples and selects by terminal joint
reward; its candidate count is calibrated to the closest \methodname
wall-clock budget. \textsc{TDS} combines a gradient-guided proposal with
sequential importance correction. \textsc{SMC} uses the pretrained reverse
kernel as its proposal and resamples particles; \textsc{FK-Steering} uses the
published discrete Feynman--Kac difference potential; and \textsc{SVDD-PM}
greedily selects local candidates using posterior-mean values. RNA excludes
\textsc{TDS} because neither selected task supplies a differentiable full joint
reward.

For the gated-reward comparisons, \methodname\ and all reward-based
baselines share the same reward parameters within each task and gate mode:
$P=30$, $w_k=1$ for every requirement, and the soft-gate widths in
Tab.~\ref{tab:reward-config}. In soft-gate mode, both requirements use soft
gates. In hard-gate mode, Activity (DNA), $\Delta\Delta G$ (protein), or
MRL (RNA) uses the binary $0/-P$ gate, while the other requirement retains
its soft gate. The width $\tau_k$ is unused for a hard gate.
\textsc{Pretrained} does not use rewards during generation.

\begin{table}[ht]
  \centering
  \caption{\textbf{Shared soft-gate widths.} The widths follow the reward order shown.}
  \label{tab:reward-config}
  \small
  \setlength{\tabcolsep}{8pt}
  \begin{tabular}{llcc}
    \toprule
    \textbf{Domain} & \textbf{Reward Pair} & $\tau_1$ & $\tau_2$ \\
    \midrule
    DNA     & Activity + ATAC            & $0.5$  & $0.05$ \\
            & Activity + MaxGap          & $0.5$  & $0.5$  \\
    \cmidrule(lr){1-4}
    Protein & $\Delta\Delta G$ + SeqRec   & $0.3$  & $0.03$ \\
            & $\Delta\Delta G$ + GRAVY    & $0.3$  & $0.10$ \\
    \cmidrule(lr){1-4}
    RNA     & MRL + Opening $\Delta G$   & $0.05$ & $0.25$ \\
            & MRL + MFE                  & $0.05$ & $1.0$  \\
    \bottomrule
  \end{tabular}
\end{table}

Every baseline runs at the defaults published with its own implementation.
\textsc{Best-of-N} and \textsc{SVDD} expose a compute budget through their
candidate counts $K$ and $M$, which we set per block to the closest match to
the \methodname\ generation budget that is not cheaper than it.

Every method returns the same number of final designs within each domain:
$N=640$ DNA sequences per seed, $N=128$ protein sequences per wild-type
backbone, and $N=128$ RNA sequences per seed. Task definitions, frozen priors,
and reward models used for evaluation are fixed across methods. DNA activity
and protein ddG rewards are computed using model checkpoints separate from
those used for guidance. Recorded generation wall time is normalized by the
number of returned designs.

\section{Separating the Search Layer from the Local Kernel}
\label{app:search-transfer}

Tab.~\ref{tab:ablation} removes components from \methodname. This section runs
the complementary test: it gives the search layer of
Sec.~\ref{sec:method-search} to a different local kernel. We keep SVDD-PM at
its released hyperparameters and change only where computation is spent.
Branches are scored with SVDD-PM's own posterior-mean completion, so no
component of Sec.~\ref{sec:method-doob} enters. Solving the search arm's cost
for the \methodname\ oracle budget gives $M=4$, which is also
\methodname's own candidate width, so the matched row and \methodname\ differ
only in the per-step transition.

\begin{table}[t]
  \centering
  \caption{\textbf{Separating the search layer from the local kernel.} DNA
  activity--ATAC, $N=640$, three seeds. Rows marked Tab.~\ref{tab:dna-main} are
  reproduced from the main results. Oracle counts reward-model forward passes
  per seed; time is generation wall clock per returned design.}
  \label{tab:search-transfer}
  \small
  \setlength{\tabcolsep}{6pt}
  \begin{tabular}{lccccccc}
    \toprule
    \textbf{Method} & $M$ & $K$ & $s$
      & \textbf{Joint SR (\%) $\uparrow$} & \textbf{3-mer $r$}
      & \textbf{Oracle} & \textbf{Time (ms)} \\
    \midrule
    \multicolumn{8}{c}{\textit{Soft-Gate Mode}} \\
    \baseline{SVDD-PM} (Tab.~\ref{tab:dna-main})
      & $12$ & $1$ & --   & \meanstd{55.16}{1.55} & $0.74$ & $1.97$M & $244.7$ \\
    \;\;$+$ search, matched budget
      & $4$  & $4$ & $64$ & \meanstd{63.02}{2.20} & $0.73$ & $1.96$M & $251.0$ \\
    \;\;$+$ search, released $M$
      & $20$ & $4$ & $64$ & \meanstd{84.43}{0.96} & $0.79$ & $8.51$M & $1101.8$ \\
    \methodname\ (Tab.~\ref{tab:dna-main})
      & $4$  & $4$ & $64$ & \meanstd{98.39}{0.29} & $0.88$ & $1.94$M & $234.6$ \\
    \midrule
    \multicolumn{8}{c}{\textit{Hard-Gate Mode}} \\
    \baseline{SVDD-PM} (Tab.~\ref{tab:dna-main})
      & $12$ & $1$ & --   & \meanstd{45.05}{0.97} & $0.69$ & $1.97$M & $244.6$ \\
    \;\;$+$ search, matched budget
      & $4$  & $4$ & $64$ & \meanstd{52.34}{0.80} & $0.65$ & $1.96$M & $250.7$ \\
    \;\;$+$ search, released $M$
      & $20$ & $4$ & $64$ & \meanstd{72.86}{1.41} & $0.71$ & $8.51$M & $1101.8$ \\
    \methodname\ (Tab.~\ref{tab:dna-main})
      & $4$  & $4$ & $64$ & \meanstd{87.55}{0.65} & $0.86$ & $1.94$M & $234.6$ \\
    \bottomrule
  \end{tabular}
\end{table}

The search layer transfers. At $1.03\times$ the SVDD-PM generation time and a
slightly smaller oracle budget, it adds $7.86$ and $7.29$ points on the soft
and hard gates.

The local kernel accounts for the larger share. The matched row and
\methodname\ share $M$, $K$, and $s$ and differ only in how one step selects
its next state, and that difference is $35.37$ and $35.21$ points. Raising the
SVDD candidate width until the search arm spends $4.39\times$ the \methodname\
oracle budget and $4.70\times$ its time still leaves it $13.96$ and $14.69$
points behind. Naturalness moves the same way: the transferred search costs the
SVDD kernel some 3-mer correlation ($0.74$ to $0.73$ soft, $0.69$ to $0.65$
hard), while \methodname\ attains the highest value in both modes.

Tab.~\ref{tab:svdd-width} places these arms against the only budget knob the
SVDD kernel has on its own. Joint success rises monotonically with the
candidate width $M$, but the marginal return falls from $5.0$ points per unit
$M$ between $M=4$ and $M=8$ to $0.9$ between $M=16$ and $M=20$, so buying
joint feasibility through candidates alone becomes expensive well before the
gap to \methodname\ closes. The matched search arm comes within $0.7$ points
of the $M=20$ row on the soft gate while spending $0.60\times$ its oracle
budget and $0.58\times$ its time, and lands near $M=17$ on the hard gate.
Reallocating a fixed budget is therefore worth more to this kernel than
widening its candidate set.

% Generated by _svdd_curve_latex.py -- do not edit by hand.
\begin{table}[t]
  \centering
  \caption{\textbf{SVDD-PM candidate width on DNA activity--ATAC.} $T=128$, $N=640$, three seeds,
  no search layer. Oracle and time are shared by the two gate modes to within $0.2\%$.}
  \label{tab:svdd-width}
  \small
  \setlength{\tabcolsep}{7pt}
  \begin{tabular}{lcccccc}
    \toprule
    \multirow{2}{*}{$M$} & \multirow{2}{*}{\textbf{Oracle}} & \multirow{2}{*}{\textbf{Time (ms)}}
      & \multicolumn{2}{c}{\textbf{Joint SR (\%) $\uparrow$}} & \multicolumn{2}{c}{\textbf{3-mer $r$}} \\
    \cmidrule(lr){4-5}\cmidrule(lr){6-7}
    & & & Soft & Hard & Soft & Hard \\
    \midrule
    $4$ & $0.66$M & $84$ & \meanstd{26.30}{0.83} & \meanstd{18.59}{1.38} & $0.69$ & $0.64$ \\
    $8$ & $1.31$M & $159$ & \meanstd{46.25}{1.00} & \meanstd{37.71}{1.36} & $0.72$ & $0.68$ \\
    $10$ & $1.64$M & $184$ & \meanstd{51.51}{1.67} & \meanstd{43.07}{0.70} & $0.73$ & $0.69$ \\
    $12$\;\;{\scriptsize(Tab.~\ref{tab:dna-main})} & $1.97$M & $245$ & \meanstd{55.16}{1.55} & \meanstd{45.05}{0.97} & $0.74$ & $0.69$ \\
    $16$ & $2.62$M & $336$ & \meanstd{60.21}{2.78} & \meanstd{51.56}{1.67} & $0.75$ & $0.71$ \\
    $20$ & $3.28$M & $434$ & \meanstd{63.70}{1.09} & \meanstd{55.16}{1.85} & $0.76$ & $0.71$ \\
    \bottomrule
  \end{tabular}
\end{table}

\section{Additional Tasks}
\label{app:additional-tasks}

In this section, we extend the evaluation to DNA cell-type specificity,
protein hydrophilicity, and RNA translation paired with accessibility or
folding stability. We examine these additional requirements under both
gate modes.

\subsection{DNA Cell-Type Specificity}
\label{app:dna-maxgap}

Tab.~\ref{tab:dna-maxgap} reports the second DNA task, \emph{Activity +
MaxGap}, in which HepG2 activity is paired with a specificity margin against
K562 and SK-N-SH. Unlike \emph{Activity + ATAC}, the two objectives are in
tension: in the reference enhancer set activity and specificity are negatively
correlated, so the direction that raises expression in HepG2 tends to raise it
in the off-target lines as well. The hard-gate mode replaces activity's
smooth margin feedback with a binary penalty, removing information about
how far a candidate lies from the activity threshold.

\begin{table*}[t]
  \centering
  \caption{\textbf{DNA cell-type specificity: \emph{Activity $+$ MaxGap}.} HepG2 activity paired with a specificity margin against K562 and SK-N-SH. Unlike \emph{Activity $+$ ATAC} the two objectives are in tension. The activity term uses a hard gate or a soft gate, while MaxGap retains its soft gate. With a hard gate, activity's own pass rate \emph{falls} and the designs drift toward uniform composition. Uniqueness, 3-mer $r$ and JASPAR $r$ are guardrails, not objectives, and are reported without ranking. $N=640$ per seed, mean $\pm$ s.d. over three seeds; time is ms per design.}
  \label{tab:dna-maxgap}
  \setlength{\tabcolsep}{3.6pt}
  \renewcommand{\arraystretch}{1.13}
  \resizebox{\textwidth}{!}{%
  \begin{tabular}{lIcIccIccJcIcIcIc}
    \dimosOuterRule
    \multirow{2}{*}{\textbf{Method}}
      & \multirow{2}{*}{\textbf{Joint SR (\%) $\uparrow$}}
      & \multicolumn{2}{cI}{\textbf{Activity $\uparrow$}}
      & \multicolumn{2}{cJ}{\textbf{MaxGap $\uparrow$}}
      & \multirow{2}{*}{\textbf{Uniq. (\%)}}
      & \multirow{2}{*}{\textbf{3-mer $r$}}
      & \multirow{2}{*}{\textbf{JASPAR $r$}}
      & \multirow{2}{*}{\textbf{Time (ms)}} \\
      & & \textbf{Median} & \textbf{Top-10}
      & \textbf{Median} & \textbf{Top-10}
      & & & & \\
    \dimosMajorRule
    \rowcolor{dimosTask!70}
    \multicolumn{10}{c}{\color{dimosNavy}\textit{Hard-Gate Mode}} \\
    \baseline{Pretrained}
      & \meanstd{0.00}{0.00}
      & \meanstd{0.18}{0.01}
      & \meanstd{2.84}{0.25}
      & \meanstd{-0.12}{0.00}
      & \meanstd{0.39}{0.30}
      & \meanstd{100.00}{0.00}
      & \meanstd{-0.04}{0.01}
      & \meanstd{0.29}{0.01}
      & \meanstd{8.48}{0.20} \\
    \baseline{Best-of-N}
      & \meanstd{1.20}{0.27}
      & \meanstd{1.16}{0.05}
      & \meanstd{4.49}{0.24}
      & \meanstd{0.44}{0.00}
      & \meanstd{2.37}{0.65}
      & \meanstd{100.00}{0.00}
      & \meanstd{0.40}{0.01}
      & \meanstd{0.74}{0.00}
      & \meanstd{273.38}{0.32} \\
    \baseline{TDS}
      & \meanstd{7.71}{4.23}
      & \firstresult{\meanstd{4.37}{0.10}}
      & \meanstd{5.23}{0.20}
      & \meanstd{-0.37}{0.20}
      & \meanstd{1.88}{0.45}
      & \meanstd{46.30}{1.62}
      & \meanstd{0.77}{0.06}
      & \meanstd{0.85}{0.04}
      & \meanstd{171.94}{0.70} \\
    \baseline{SMC}
      & \meanstd{5.89}{1.42}
      & \secondresult{\meanstd{4.35}{0.10}}
      & \meanstd{4.79}{0.13}
      & \meanstd{-0.44}{0.07}
      & \meanstd{2.11}{0.49}
      & \meanstd{47.34}{3.77}
      & \meanstd{0.79}{0.03}
      & \meanstd{0.89}{0.01}
      & \meanstd{75.43}{0.19} \\
    \baseline{FK-Steering}
      & \meanstd{1.72}{0.34}
      & \meanstd{0.59}{0.02}
      & \meanstd{4.61}{0.11}
      & \meanstd{0.19}{0.01}
      & \meanstd{3.02}{0.47}
      & \meanstd{100.00}{0.00}
      & \meanstd{0.18}{0.04}
      & \meanstd{0.52}{0.03}
      & \meanstd{69.84}{0.61} \\
    \baseline{SVDD}
      & \secondresult{\meanstd{35.16}{1.60}}
      & \meanstd{3.71}{0.06}
      & \secondresult{\meanstd{6.55}{0.09}}
      & \secondresult{\meanstd{1.62}{0.07}}
      & \secondresult{\meanstd{5.59}{0.35}}
      & \meanstd{100.00}{0.00}
      & \meanstd{0.46}{0.01}
      & \meanstd{0.74}{0.01}
      & \meanstd{280.22}{0.44} \\
    \dimosMinorRule
    \rowcolor{dimosOurs!70}
    \textbf{\methodname\ (Ours)}
      & \firstresult{\meanstd{57.03}{0.92}}
      & \meanstd{4.34}{0.02}
      & \firstresult{\meanstd{7.54}{0.07}}
      & \firstresult{\meanstd{3.11}{0.12}}
      & \firstresult{\meanstd{6.68}{0.29}}
      & \meanstd{100.00}{0.00}
      & \meanstd{0.26}{0.01}
      & \meanstd{0.74}{0.01}
      & \meanstd{275.35}{0.07} \\
    \dimosMajorRule
    \rowcolor{dimosTask!70}
    \multicolumn{10}{c}{\color{dimosNavy}\textit{Soft-Gate Mode}} \\
    \baseline{Pretrained}
      & \meanstd{0.00}{0.00}
      & \meanstd{0.18}{0.01}
      & \meanstd{2.84}{0.25}
      & \meanstd{-0.12}{0.00}
      & \meanstd{0.39}{0.30}
      & \meanstd{100.00}{0.00}
      & \meanstd{-0.04}{0.01}
      & \meanstd{0.29}{0.01}
      & \meanstd{8.48}{0.20} \\
    \baseline{Best-of-N}
      & \meanstd{1.25}{0.22}
      & \meanstd{2.40}{0.03}
      & \meanstd{4.49}{0.23}
      & \meanstd{0.07}{0.04}
      & \meanstd{2.37}{0.65}
      & \meanstd{100.00}{0.00}
      & \meanstd{0.49}{0.03}
      & \meanstd{0.78}{0.02}
      & \meanstd{274.49}{1.16} \\
    \baseline{TDS}
      & \meanstd{9.79}{1.96}
      & \meanstd{2.95}{0.03}
      & \meanstd{5.40}{0.67}
      & \meanstd{0.21}{0.24}
      & \meanstd{3.32}{0.33}
      & \meanstd{46.98}{0.70}
      & \meanstd{0.28}{0.10}
      & \meanstd{0.42}{0.12}
      & \meanstd{167.61}{1.22} \\
    \baseline{SMC}
      & \meanstd{11.09}{5.18}
      & \meanstd{2.71}{0.55}
      & \meanstd{5.34}{0.29}
      & \meanstd{0.21}{0.12}
      & \meanstd{3.08}{0.41}
      & \meanstd{48.80}{0.48}
      & \meanstd{0.34}{0.24}
      & \meanstd{0.53}{0.19}
      & \meanstd{75.09}{0.22} \\
    \baseline{FK-Steering}
      & \meanstd{1.72}{0.46}
      & \meanstd{1.63}{0.11}
      & \meanstd{4.62}{0.14}
      & \meanstd{-0.01}{0.02}
      & \meanstd{2.96}{0.49}
      & \meanstd{100.00}{0.00}
      & \meanstd{0.39}{0.04}
      & \meanstd{0.71}{0.03}
      & \meanstd{69.80}{0.26} \\
    \baseline{SVDD}
      & \secondresult{\meanstd{44.06}{2.13}}
      & \secondresult{\meanstd{4.05}{0.04}}
      & \secondresult{\meanstd{7.10}{0.13}}
      & \secondresult{\meanstd{1.56}{0.07}}
      & \secondresult{\meanstd{4.83}{0.39}}
      & \meanstd{100.00}{0.00}
      & \meanstd{0.60}{0.00}
      & \meanstd{0.84}{0.00}
      & \meanstd{280.49}{0.39} \\
    \dimosMinorRule
    \rowcolor{dimosOurs!70}
    \textbf{\methodname\ (Ours)}
      & \firstresult{\meanstd{72.60}{0.29}}
      & \firstresult{\meanstd{5.06}{0.05}}
      & \firstresult{\meanstd{7.69}{0.11}}
      & \firstresult{\meanstd{2.26}{0.10}}
      & \firstresult{\meanstd{5.50}{0.25}}
      & \meanstd{100.00}{0.00}
      & \meanstd{0.73}{0.01}
      & \meanstd{0.87}{0.00}
      & \meanstd{275.73}{0.42} \\
    \dimosOuterRule
  \end{tabular}%
  }
\end{table*}

\subsection{Protein Hydrophilicity (GRAVY)}
\label{app:protein-gravy}

Tab.~\ref{tab:protein-gravy} repeats the protein comparison with the secondary objective
replaced by hydrophilicity, GRAVY $< -0.70$, using the Kyte--Doolittle
hydropathy scale~\citep{kyte1982simple} under the same two gate modes.
Two features of this task are worth isolating.

First, \texttt{TDS} reaches $0.00\%$ joint success in both gate modes,
despite clearing the binding threshold on approximately 52--54\% of its
designs. No output satisfies the hydrophilicity requirement.

Second, the wall-clock axis runs the other way here. \methodname\ costs about
$80$\,ms per design in both gate modes, \emph{below} every published baseline budget, so the
budget-matched partners are \texttt{Best-of-N} at $K{=}8$ and \texttt{SVDD} at $M{=}4$
rather than the larger settings used elsewhere. At that budget \methodname\ reaches
27.39\% against 18.73\% and
19.12\%, and it also exceeds the strongest baseline at \emph{any}
budget we measured. Both GRAVY configurations use $M{=}2$ and $K{=}2$.

\begin{table*}[t]
  \centering
  \caption{\textbf{Protein binder design, $\Delta\Delta G > 0$ and GRAVY $< -0.70$.} Same gate modes and protocol as Table~\ref{tab:protein-main}. \texttt{Best-of-N} and \texttt{SVDD} are shown at the budget closest to \methodname's wall clock, which in the hard-gate block is \emph{below} their published budgets because \methodname\ runs at $79$\,ms per design there. Time is milliseconds per design.}
  \label{tab:protein-gravy}
  \setlength{\tabcolsep}{3.6pt}
  \renewcommand{\arraystretch}{1.13}
  \resizebox{\textwidth}{!}{%
  \begin{tabular}{lIcIccIccJcIcIcIc}
    \dimosOuterRule
    \multirow{2}{*}{\textbf{Method}}
      & \multirow{2}{*}{\textbf{Joint SR (\%) $\uparrow$}}
      & \multicolumn{2}{cI}{\textbf{$\Delta\Delta G$ $\uparrow$}}
      & \multicolumn{2}{cJ}{\textbf{GRAVY $\downarrow$}}
      & \multirow{2}{*}{\textbf{Uniq. (\%)}}
      & \multirow{2}{*}{\textbf{scRMSD (\%)}}
      & \multirow{2}{*}{\textbf{pLDDT (\%)}}
      & \multirow{2}{*}{\textbf{Time (ms)}} \\
      & & \textbf{Median} & \textbf{Top-10}
      & \textbf{Median} & \textbf{Top-10}
      & & & & \\
    \dimosMajorRule
    \rowcolor{dimosTask!70}
    \multicolumn{10}{c}{\color{dimosNavy}\textit{Hard-Gate Mode}} \\
    \baseline{Pretrained}
      & \meanstd{6.84}{0.21}
      & \meanstd{-0.45}{0.00}
      & \meanstd{0.28}{0.02}
      & \meanstd{-0.57}{0.00}
      & \meanstd{-0.77}{0.00}
      & \meanstd{99.67}{0.05}
      & \meanstd{93.55}{0.11}
      & \meanstd{87.98}{0.11}
      & \meanstd{16.90}{0.08} \\
    \baseline{Best-of-N}
      & \meanstd{18.73}{0.16}
      & \secondresult{\meanstd{-0.28}{0.00}}
      & \secondresult{\meanstd{0.44}{0.02}}
      & \meanstd{-0.75}{0.00}
      & \meanstd{-0.86}{0.00}
      & \meanstd{98.31}{0.09}
      & \meanstd{93.55}{0.05}
      & \meanstd{87.96}{0.14}
      & \meanstd{81.64}{0.09} \\
    \baseline{TDS}
      & \meanstd{0.00}{0.00}
      & \firstresult{\meanstd{0.18}{0.03}}
      & \meanstd{0.36}{0.04}
      & \meanstd{0.43}{0.01}
      & \meanstd{0.24}{0.01}
      & \meanstd{11.11}{0.27}
      & \meanstd{88.76}{0.35}
      & \meanstd{84.38}{0.00}
      & \meanstd{121.42}{0.75} \\
    \baseline{SMC}
      & \meanstd{16.93}{1.24}
      & \secondresult{\meanstd{-0.28}{0.01}}
      & \meanstd{0.40}{0.06}
      & \meanstd{-0.51}{0.00}
      & \meanstd{-0.75}{0.01}
      & \meanstd{22.59}{0.77}
      & \meanstd{94.92}{0.63}
      & \meanstd{86.65}{0.37}
      & \meanstd{80.37}{0.06} \\
    \baseline{FK-Steering}
      & \meanstd{12.89}{0.14}
      & \secondresult{\meanstd{-0.28}{0.00}}
      & \firstresult{\meanstd{0.51}{0.01}}
      & \meanstd{-0.64}{0.00}
      & \meanstd{-0.79}{0.01}
      & \meanstd{99.18}{0.11}
      & \meanstd{93.06}{0.16}
      & \meanstd{87.48}{0.11}
      & \meanstd{192.31}{0.03} \\
    \baseline{SVDD}
      & \secondresult{\meanstd{19.12}{0.27}}
      & \meanstd{-0.61}{0.03}
      & \meanstd{0.33}{0.01}
      & \secondresult{\meanstd{-0.95}{0.00}}
      & \secondresult{\meanstd{-1.00}{0.01}}
      & \meanstd{99.09}{0.09}
      & \meanstd{92.58}{0.09}
      & \meanstd{88.00}{0.27}
      & \meanstd{85.79}{0.02} \\
    \dimosMinorRule
    \rowcolor{dimosOurs!70}
    \textbf{\methodname\ (Ours)}
      & \firstresult{\meanstd{27.39}{0.12}}
      & \meanstd{-0.71}{0.02}
      & \meanstd{0.40}{0.02}
      & \firstresult{\meanstd{-1.08}{0.00}}
      & \firstresult{\meanstd{-1.05}{0.00}}
      & \meanstd{100.00}{0.00}
      & \meanstd{91.86}{0.23}
      & \meanstd{84.85}{0.13}
      & \meanstd{79.98}{0.19} \\
    \dimosMajorRule
    \rowcolor{dimosTask!70}
    \multicolumn{10}{c}{\color{dimosNavy}\textit{Soft-Gate Mode}} \\
    \baseline{Pretrained}
      & \meanstd{6.84}{0.21}
      & \meanstd{-0.45}{0.00}
      & \meanstd{0.28}{0.02}
      & \meanstd{-0.57}{0.00}
      & \meanstd{-0.77}{0.00}
      & \meanstd{99.67}{0.05}
      & \meanstd{93.55}{0.11}
      & \meanstd{87.98}{0.11}
      & \meanstd{16.73}{0.10} \\
    \baseline{Best-of-N}
      & \meanstd{21.83}{0.27}
      & \meanstd{-0.28}{0.00}
      & \meanstd{0.49}{0.01}
      & \meanstd{-0.76}{0.00}
      & \meanstd{-0.86}{0.00}
      & \meanstd{98.63}{0.09}
      & \meanstd{91.60}{0.11}
      & \meanstd{88.19}{0.16}
      & \meanstd{81.48}{0.08} \\
    \baseline{TDS}
      & \meanstd{0.00}{0.00}
      & \firstresult{\meanstd{0.12}{0.01}}
      & \meanstd{0.39}{0.05}
      & \meanstd{0.42}{0.01}
      & \meanstd{0.22}{0.00}
      & \meanstd{10.37}{0.21}
      & \meanstd{89.97}{0.83}
      & \meanstd{85.42}{0.00}
      & \meanstd{117.91}{0.01} \\
    \baseline{SMC}
      & \meanstd{10.13}{0.35}
      & \meanstd{-0.43}{0.02}
      & \meanstd{0.27}{0.04}
      & \meanstd{-0.60}{0.01}
      & \meanstd{-0.81}{0.00}
      & \meanstd{23.61}{0.91}
      & \meanstd{93.55}{0.23}
      & \meanstd{87.48}{0.52}
      & \meanstd{78.98}{0.17} \\
    \baseline{FK-Steering}
      & \meanstd{13.41}{0.09}
      & \meanstd{-0.41}{0.00}
      & \meanstd{0.35}{0.02}
      & \meanstd{-0.71}{0.01}
      & \meanstd{-0.86}{0.00}
      & \meanstd{99.26}{0.03}
      & \meanstd{92.25}{0.11}
      & \meanstd{88.06}{0.08}
      & \meanstd{192.13}{0.08} \\
    \baseline{SVDD}
      & \secondresult{\meanstd{36.22}{0.26}}
      & \meanstd{-0.15}{0.02}
      & \secondresult{\meanstd{0.54}{0.01}}
      & \secondresult{\meanstd{-0.86}{0.00}}
      & \secondresult{\meanstd{-0.96}{0.00}}
      & \meanstd{97.55}{0.08}
      & \meanstd{90.47}{0.03}
      & \meanstd{88.89}{0.03}
      & \meanstd{85.53}{0.03} \\
    \dimosMinorRule
    \rowcolor{dimosOurs!70}
    \textbf{\methodname\ (Ours)}
      & \firstresult{\meanstd{46.94}{0.05}}
      & \secondresult{\meanstd{-0.03}{0.01}}
      & \firstresult{\meanstd{0.84}{0.03}}
      & \firstresult{\meanstd{-0.98}{0.00}}
      & \firstresult{\meanstd{-1.02}{0.00}}
      & \meanstd{99.74}{0.05}
      & \meanstd{91.04}{0.13}
      & \meanstd{85.50}{0.06}
      & \meanstd{80.35}{0.47} \\
    \dimosOuterRule
  \end{tabular}%
  }
\end{table*}

\subsection{RNA Design}
\label{app:rna-main}

We additionally evaluate RNA design~\citep{lorenz2011viennarna,li2024derivative}
on translation paired with accessibility or folding stability.
The two gate modes differ in the translation term: it uses a hard gate
or a soft gate, while the secondary objective retains its soft gate.
Both modes use the same success criteria.

\paragraph{Translation and Accessibility.}

Tab.~\ref{tab:rna-main} is the RNA translation $+$ accessibility result, moved
here for space. Both objectives use separate reward models for guidance and
final evaluation: guidance uses a ConvGRU MRL reward model and a
constrained-MFE opening proxy, while the table
reports Optimus-5$'$ MRL and the exact partition-function opening energy. The
gap between the two reveals discrepancies between proxy and exact
structural rewards. In hard-gate mode, reward-guided baselines pass the
opening proxy criterion on 14--77\% of designs but the exact criterion on
only 1.6--4.7\%, while the corresponding rates for \methodname\ are
99.7\% and 85.2\%.

For \methodname, replacing the translation soft gate with a hard gate raises
joint SR from 78.39\% to 84.11\%. The held-out translation pass rate remains
high (99.74\% versus 98.96\%), while the exact accessibility pass rate rises
from 78.65\% to 85.16\%. This is consistent with directing more reward
feedback toward the remaining structural requirement.

\begin{table*}[t]
  \centering
  \caption{\textbf{RNA 5$'$UTR design with hard and soft gates.} Both blocks are the same \emph{translation $+$ accessibility} task with the same success criterion, and differ only in whether the translation term uses a hard gate (binary penalty) or a soft gate (smooth reward). $N{=}128$ designs per seed. \textbf{Both objectives are graded by held-out oracles}: guidance sees a ConvGRU MRL predictor and a constrained-MFE opening proxy, while the table reports Optimus-5$'$ MRL and the exact partition-function opening energy $-RT\log p_{\mathrm{unpaired}}$, a different physical quantity rather than a finer estimate of the proxy. Each baseline is shown at its measured time-closest budget rung; none runs below our budget. Mean $\pm$ s.d. over three seeds; time is milliseconds per design.}
  \label{tab:rna-main}
  \setlength{\tabcolsep}{3.6pt}
  \renewcommand{\arraystretch}{1.13}
  \resizebox{\textwidth}{!}{%
  \begin{tabular}{lIcIccIccJcIcIc}
    \dimosOuterRule
    \multirow{2}{*}{\textbf{Method}}
      & \multirow{2}{*}{\textbf{Joint SR (\%) $\uparrow$}}
      & \multicolumn{2}{cI}{\textbf{Opening $\Delta G$ $\downarrow$}}
      & \multicolumn{2}{cJ}{\textbf{ConvGRU-MRL $\uparrow$}}
      & \multirow{2}{*}{\textbf{Uniq. (\%)}}
      & \multirow{2}{*}{\textbf{Optimus-MRL $\uparrow$}}
      & \multirow{2}{*}{\textbf{Time (ms)}} \\
      & & \textbf{Median} & \textbf{Top-10}
      & \textbf{Median} & \textbf{Top-10}
      & & & \\
    \dimosMajorRule
    \rowcolor{dimosTask!70}
    \multicolumn{9}{c}{\color{dimosNavy}\textit{Hard-Gate Mode}} \\
    \baseline{Pretrained}
      & \meanstd{0.52}{0.45}
      & \meanstd{4.07}{0.17}
      & \meanstd{2.01}{0.40}
      & \meanstd{0.34}{0.08}
      & \meanstd{0.81}{0.03}
      & \meanstd{100.00}{0.00}
      & \meanstd{-0.11}{0.11}
      & \meanstd{10.69}{0.08} \\
    \baseline{Best-of-N}
      & \meanstd{1.04}{1.19}
      & \meanstd{2.04}{0.08}
      & \meanstd{0.73}{0.03}
      & \meanstd{0.89}{0.01}
      & \meanstd{0.88}{0.00}
      & \meanstd{100.00}{0.00}
      & \meanstd{0.55}{0.02}
      & \meanstd{186.22}{0.32} \\
    \baseline{FK-Steering}
      & \meanstd{1.82}{1.80}
      & \meanstd{2.25}{0.16}
      & \meanstd{0.77}{0.26}
      & \meanstd{0.81}{0.01}
      & \meanstd{0.89}{0.02}
      & \meanstd{100.00}{0.00}
      & \meanstd{0.36}{0.04}
      & \meanstd{725.52}{9.83} \\
    \baseline{SMC}
      & \secondresult{\meanstd{2.86}{4.30}}
      & \secondresult{\meanstd{0.93}{0.13}}
      & \meanstd{0.51}{0.10}
      & \secondresult{\meanstd{0.93}{0.06}}
      & \secondresult{\meanstd{1.01}{0.10}}
      & \meanstd{28.39}{1.19}
      & \secondresult{\meanstd{0.62}{0.04}}
      & \meanstd{202.26}{8.12} \\
    \baseline{SVDD}
      & \meanstd{1.82}{0.90}
      & \meanstd{1.01}{0.09}
      & \secondresult{\meanstd{0.44}{0.02}}
      & \meanstd{0.86}{0.01}
      & \meanstd{0.97}{0.02}
      & \meanstd{100.00}{0.00}
      & \meanstd{0.50}{0.02}
      & \meanstd{347.65}{3.13} \\
    \dimosMinorRule
    \rowcolor{dimosOurs!70}
    \textbf{\methodname\ (Ours)}
      & \firstresult{\meanstd{84.11}{3.69}}
      & \firstresult{\meanstd{0.13}{0.00}}
      & \firstresult{\meanstd{0.06}{0.01}}
      & \firstresult{\meanstd{0.98}{0.01}}
      & \firstresult{\meanstd{1.08}{0.01}}
      & \meanstd{100.00}{0.00}
      & \firstresult{\meanstd{0.74}{0.01}}
      & \meanstd{178.24}{1.39} \\
    \dimosMajorRule
    \rowcolor{dimosTask!70}
    \multicolumn{9}{c}{\color{dimosNavy}\textit{Soft-Gate Mode}} \\
    \baseline{Pretrained}
      & \meanstd{0.52}{0.45}
      & \meanstd{4.07}{0.17}
      & \meanstd{2.01}{0.40}
      & \meanstd{0.34}{0.08}
      & \meanstd{0.81}{0.03}
      & \meanstd{100.00}{0.00}
      & \meanstd{-0.11}{0.11}
      & \meanstd{11.25}{0.88} \\
    \baseline{Best-of-N}
      & \meanstd{1.30}{1.63}
      & \meanstd{1.43}{0.08}
      & \meanstd{0.65}{0.02}
      & \meanstd{0.82}{0.01}
      & \meanstd{0.88}{0.01}
      & \meanstd{100.00}{0.00}
      & \meanstd{0.46}{0.00}
      & \meanstd{189.46}{1.19} \\
    \baseline{FK-Steering}
      & \meanstd{0.26}{0.45}
      & \meanstd{2.34}{0.18}
      & \meanstd{0.73}{0.04}
      & \meanstd{0.77}{0.02}
      & \meanstd{0.92}{0.02}
      & \meanstd{100.00}{0.00}
      & \meanstd{0.35}{0.01}
      & \meanstd{726.31}{7.12} \\
    \baseline{SMC}
      & \meanstd{0.00}{0.00}
      & \secondresult{\meanstd{0.66}{0.25}}
      & \meanstd{0.52}{0.04}
      & \secondresult{\meanstd{0.88}{0.07}}
      & \meanstd{0.96}{0.01}
      & \meanstd{34.11}{3.16}
      & \meanstd{0.40}{0.16}
      & \meanstd{202.14}{4.76} \\
    \baseline{SVDD}
      & \secondresult{\meanstd{4.17}{1.19}}
      & \meanstd{0.91}{0.13}
      & \secondresult{\meanstd{0.34}{0.05}}
      & \firstresult{\meanstd{1.07}{0.01}}
      & \firstresult{\meanstd{1.21}{0.01}}
      & \meanstd{100.00}{0.00}
      & \secondresult{\meanstd{0.69}{0.02}}
      & \meanstd{293.86}{4.65} \\
    \dimosMinorRule
    \rowcolor{dimosOurs!70}
    \textbf{\methodname\ (Ours)}
      & \firstresult{\meanstd{78.39}{4.96}}
      & \firstresult{\meanstd{0.14}{0.01}}
      & \firstresult{\meanstd{0.08}{0.01}}
      & \firstresult{\meanstd{1.07}{0.00}}
      & \secondresult{\meanstd{1.13}{0.01}}
      & \meanstd{100.00}{0.00}
      & \firstresult{\meanstd{0.81}{0.01}}
      & \meanstd{172.14}{1.06} \\
    \dimosOuterRule
  \end{tabular}%
  }
\end{table*}

\paragraph{Translation and Folding Stability.}
\label{app:rna-mfe}

Tab.~\ref{tab:rna-mfe} repeats the RNA comparison with accessibility replaced by the 50-nt
minimum free energy. Only the translation objective
uses a held-out reward model: MRL uses a ConvGRU reward model for guidance
and Optimus-5$'$ for evaluation, while the folding energy has no cheaper proxy
and is therefore computed by the same ViennaRNA call on both sides.

This task benefits from graded translation feedback: \methodname's soft-gate guidance
achieves 54.17\% joint SR and 62.50\% held-out translation success,
compared with 36.72\% and 51.30\% under hard-gate guidance. Together with
the accessibility results, this shows that the effect of the gate depends
on which requirements remain difficult to satisfy.

\begin{table*}[t]
  \centering
  \caption{\textbf{RNA 5$'$UTR design, translation $+$ folding stability.} As in Table~\ref{tab:rna-main}, with accessibility replaced by the 50-nt minimum free energy. MRL is graded by the held-out Optimus-5$'$ predictor; MFE has no cheaper guidance proxy and is the one objective shared between guidance and grading. Mean $\pm$ s.d. over three seeds; time is milliseconds per design.}
  \label{tab:rna-mfe}
  \setlength{\tabcolsep}{3.6pt}
  \renewcommand{\arraystretch}{1.13}
  \resizebox{\textwidth}{!}{%
  \begin{tabular}{lIcIccIccJcIcIc}
    \dimosOuterRule
    \multirow{2}{*}{\textbf{Method}}
      & \multirow{2}{*}{\textbf{Joint SR (\%) $\uparrow$}}
      & \multicolumn{2}{cI}{\textbf{MFE $\downarrow$}}
      & \multicolumn{2}{cJ}{\textbf{ConvGRU-MRL $\uparrow$}}
      & \multirow{2}{*}{\textbf{Uniq. (\%)}}
      & \multirow{2}{*}{\textbf{Optimus-MRL $\uparrow$}}
      & \multirow{2}{*}{\textbf{Time (ms)}} \\
      & & \textbf{Median} & \textbf{Top-10}
      & \textbf{Median} & \textbf{Top-10}
      & & & \\
    \dimosMajorRule
    \rowcolor{dimosTask!70}
    \multicolumn{9}{c}{\color{dimosNavy}\textit{Hard-Gate Mode}} \\
    \baseline{Pretrained}
      & \meanstd{0.26}{0.45}
      & \meanstd{-8.57}{0.53}
      & \meanstd{-12.13}{1.69}
      & \meanstd{0.34}{0.08}
      & \meanstd{0.75}{0.02}
      & \meanstd{100.00}{0.00}
      & \meanstd{-0.11}{0.11}
      & \meanstd{10.73}{0.23} \\
    \baseline{Best-of-N}
      & \meanstd{0.78}{1.35}
      & \meanstd{-9.38}{0.23}
      & \meanstd{-12.72}{0.88}
      & \firstresult{\meanstd{0.85}{0.01}}
      & \meanstd{0.84}{0.01}
      & \meanstd{100.00}{0.00}
      & \firstresult{\meanstd{0.47}{0.02}}
      & \meanstd{78.81}{2.70} \\
    \baseline{FK-Steering}
      & \meanstd{2.08}{0.45}
      & \meanstd{-11.39}{0.18}
      & \meanstd{-14.29}{1.16}
      & \meanstd{0.72}{0.03}
      & \meanstd{0.84}{0.01}
      & \meanstd{100.00}{0.00}
      & \meanstd{0.32}{0.06}
      & \meanstd{140.91}{1.77} \\
    \baseline{SMC}
      & \secondresult{\meanstd{18.75}{7.53}}
      & \secondresult{\meanstd{-14.77}{0.89}}
      & \meanstd{-16.10}{0.62}
      & \firstresult{\meanstd{0.85}{0.01}}
      & \meanstd{0.85}{0.02}
      & \meanstd{21.09}{4.13}
      & \secondresult{\meanstd{0.46}{0.02}}
      & \meanstd{68.72}{0.93} \\
    \baseline{SVDD}
      & \meanstd{15.62}{2.07}
      & \meanstd{-14.09}{0.21}
      & \secondresult{\meanstd{-18.40}{0.74}}
      & \secondresult{\meanstd{0.83}{0.00}}
      & \secondresult{\meanstd{0.86}{0.01}}
      & \meanstd{100.00}{0.00}
      & \meanstd{0.44}{0.03}
      & \meanstd{74.88}{3.49} \\
    \dimosMinorRule
    \rowcolor{dimosOurs!70}
    \textbf{\methodname\ (Ours)}
      & \firstresult{\meanstd{36.72}{2.07}}
      & \firstresult{\meanstd{-18.36}{0.17}}
      & \firstresult{\meanstd{-20.33}{0.27}}
      & \meanstd{0.82}{0.01}
      & \firstresult{\meanstd{0.88}{0.03}}
      & \meanstd{100.00}{0.00}
      & \secondresult{\meanstd{0.46}{0.02}}
      & \meanstd{36.12}{4.43} \\
    \dimosMajorRule
    \rowcolor{dimosTask!70}
    \multicolumn{9}{c}{\color{dimosNavy}\textit{Soft-Gate Mode}} \\
    \baseline{Pretrained}
      & \meanstd{0.26}{0.45}
      & \meanstd{-8.57}{0.53}
      & \meanstd{-12.13}{1.69}
      & \meanstd{0.34}{0.08}
      & \meanstd{0.75}{0.02}
      & \meanstd{100.00}{0.00}
      & \meanstd{-0.11}{0.11}
      & \meanstd{10.92}{0.43} \\
    \baseline{Best-of-N}
      & \meanstd{1.30}{0.90}
      & \meanstd{-11.32}{0.43}
      & \meanstd{-14.43}{0.36}
      & \meanstd{0.77}{0.02}
      & \meanstd{0.77}{0.01}
      & \meanstd{100.00}{0.00}
      & \meanstd{0.41}{0.02}
      & \meanstd{79.33}{1.58} \\
    \baseline{FK-Steering}
      & \meanstd{4.43}{2.26}
      & \meanstd{-11.62}{0.33}
      & \meanstd{-15.35}{0.53}
      & \meanstd{0.74}{0.03}
      & \meanstd{0.83}{0.02}
      & \meanstd{100.00}{0.00}
      & \meanstd{0.36}{0.03}
      & \meanstd{140.44}{2.46} \\
    \baseline{SMC}
      & \secondresult{\meanstd{21.35}{19.09}}
      & \secondresult{\meanstd{-17.19}{0.60}}
      & \meanstd{-17.47}{0.90}
      & \meanstd{0.77}{0.05}
      & \meanstd{0.87}{0.02}
      & \meanstd{26.04}{2.39}
      & \meanstd{0.34}{0.14}
      & \meanstd{70.06}{3.43} \\
    \baseline{SVDD}
      & \meanstd{19.27}{1.63}
      & \meanstd{-14.21}{0.64}
      & \secondresult{\meanstd{-18.31}{0.45}}
      & \secondresult{\meanstd{0.88}{0.01}}
      & \secondresult{\meanstd{0.91}{0.02}}
      & \meanstd{100.00}{0.00}
      & \firstresult{\meanstd{0.51}{0.03}}
      & \meanstd{68.83}{1.27} \\
    \dimosMinorRule
    \rowcolor{dimosOurs!70}
    \textbf{\methodname\ (Ours)}
      & \firstresult{\meanstd{54.17}{0.45}}
      & \firstresult{\meanstd{-18.72}{0.14}}
      & \firstresult{\meanstd{-20.10}{0.42}}
      & \firstresult{\meanstd{0.92}{0.00}}
      & \firstresult{\meanstd{0.99}{0.01}}
      & \meanstd{100.00}{0.00}
      & \secondresult{\meanstd{0.50}{0.02}}
      & \meanstd{35.71}{3.53} \\
    \dimosOuterRule
  \end{tabular}%
  }
\end{table*}

% TODO: Add supplementary derivations, protocols, and results.

\section{Continuous-Score Rewards}
\label{app:soft-objective}

Sec.~\ref{sec:method-reward} builds the joint reward from per-property
thresholds. The local resampling rule of Sec.~\ref{sec:method-doob} uses
$\exp(R/\beta)$ and applies directly to continuous property scores that
continue to increase beyond the acceptance thresholds.
Tabs.~\ref{tab:soft-objective-dna}
and~\ref{tab:soft-objective-protein} repeat the DNA and protein tasks with
the gated reward of Eq.~(\ref{eq:reward}) replaced by a weighted sum of the
two property scores, standardized on frozen-prior samples. The task, the
seven configurations and the success criteria are unchanged, and nothing
was re-tuned.

\begin{table*}[t]
  \centering
  \caption{\textbf{DNA design guided by a continuous property score.} The same \emph{Activity $+$ ATAC} task and the same seven configurations as Tab.~\ref{tab:dna-main}, with the gated reward of Eq.~(\ref{eq:reward}) replaced by a weighted sum of the two property scores standardized on frozen-prior samples, $R_\lambda=\lambda z_{\text{act}}+(1-\lambda)z_{\text{ATAC}}$. No acceptance threshold enters the reward, and nothing was re-tuned. Joint SR is scored against the unchanged canonical criteria. $\lambda$ weights the first objective and is part of the reward, not a sampler setting: at one $\lambda$ every method optimizes the same $R_\lambda$. Because $z$ is in units of the prior's own standard deviation, $\lambda=0.5$ means equal weight in prior spread, not equal importance. Blocks are different objective functions and are not comparable to each other. Guardrails are reported without ranking. $N=640$ per seed, mean $\pm$ s.d. over three seeds.}
  \label{tab:soft-objective-dna}
  \setlength{\tabcolsep}{3.6pt}
  \renewcommand{\arraystretch}{1.13}
  \resizebox{\textwidth}{!}{%
  \begin{tabular}{lIcIccIccJcIcIcIc}
    \dimosOuterRule
    \multirow{2}{*}{\textbf{Method}}
      & \multirow{2}{*}{\textbf{Joint SR (\%) $\uparrow$}}
      & \multicolumn{2}{cI}{\textbf{Activity $\uparrow$}}
      & \multicolumn{2}{cJ}{\textbf{ATAC $\uparrow$}}
      & \multirow{2}{*}{\textbf{Uniq. (\%)}}
      & \multirow{2}{*}{\textbf{3-mer $r$}}
      & \multirow{2}{*}{\textbf{JASPAR $r$}}
      & \multirow{2}{*}{\textbf{Time (ms)}} \\
      & & \textbf{Median} & \textbf{Top-10} & \textbf{Median} & \textbf{Top-10} & & & & \\
    \dimosMajorRule
    \rowcolor{dimosTask!70}
    \multicolumn{10}{c}{\color{dimosNavy}\textit{$\lambda=0.25$: ATAC-leaning}} \\
    \baseline{Pretrained}
      & \meanstd{0.00}{0.00}
      & \meanstd{0.18}{0.01}
      & \meanstd{2.41}{0.39}
      & \meanstd{0.03}{0.00}
      & \meanstd{0.65}{0.06}
      & \meanstd{100.00}{0.00}
      & \meanstd{-0.04}{0.01}
      & \meanstd{0.29}{0.01}
      & \meanstd{8.13}{0.01} \\
    \baseline{Best-of-N}
      & \meanstd{1.77}{0.39}
      & \meanstd{1.51}{0.03}
      & \meanstd{5.24}{0.26}
      & \meanstd{0.41}{0.01}
      & \meanstd{0.89}{0.03}
      & \meanstd{100.00}{0.00}
      & \meanstd{0.78}{0.00}
      & \meanstd{0.91}{0.00}
      & \meanstd{202.12}{0.10} \\
    \baseline{TDS}
      & \meanstd{3.70}{1.67}
      & \meanstd{1.11}{0.11}
      & \meanstd{5.28}{0.13}
      & \meanstd{0.28}{0.05}
      & \meanstd{0.84}{0.06}
      & \meanstd{51.88}{1.38}
      & \meanstd{0.37}{0.06}
      & \meanstd{0.56}{0.09}
      & \meanstd{204.19}{0.26} \\
    \baseline{SMC}
      & \meanstd{6.25}{4.69}
      & \meanstd{1.35}{0.29}
      & \meanstd{4.79}{0.79}
      & \meanstd{0.29}{0.17}
      & \meanstd{0.92}{0.01}
      & \meanstd{51.98}{1.19}
      & \meanstd{0.39}{0.09}
      & \meanstd{0.67}{0.08}
      & \meanstd{90.12}{0.31} \\
    \baseline{FK-Steering}
      & \meanstd{6.25}{0.44}
      & \meanstd{1.47}{0.04}
      & \meanstd{5.83}{0.22}
      & \meanstd{0.38}{0.02}
      & \secondresult{\meanstd{0.95}{0.01}}
      & \meanstd{100.00}{0.00}
      & \meanstd{0.71}{0.01}
      & \meanstd{0.89}{0.00}
      & \meanstd{110.83}{0.59} \\
    \baseline{SVDD}
      & \secondresult{\meanstd{47.34}{2.67}}
      & \secondresult{\meanstd{4.12}{0.13}}
      & \secondresult{\meanstd{7.23}{0.13}}
      & \secondresult{\meanstd{0.90}{0.00}}
      & \firstresult{\meanstd{0.99}{0.00}}
      & \meanstd{100.00}{0.00}
      & \meanstd{0.68}{0.01}
      & \meanstd{0.91}{0.01}
      & \meanstd{203.62}{0.50} \\
    \dimosMinorRule
    \rowcolor{dimosOurs!70}
    \textbf{\methodname\ (Ours)}
      & \firstresult{\meanstd{89.90}{1.19}}
      & \firstresult{\meanstd{5.75}{0.05}}
      & \firstresult{\meanstd{7.52}{0.18}}
      & \firstresult{\meanstd{0.99}{0.00}}
      & \firstresult{\meanstd{0.99}{0.00}}
      & \meanstd{100.00}{0.00}
      & \meanstd{0.91}{0.01}
      & \meanstd{0.93}{0.00}
      & \meanstd{195.46}{0.74} \\
    \dimosMajorRule
    \rowcolor{dimosTask!70}
    \multicolumn{10}{c}{\color{dimosNavy}\textit{$\lambda=0.50$: equal weight}} \\
    \baseline{Pretrained}
      & \meanstd{0.00}{0.00}
      & \meanstd{0.18}{0.01}
      & \meanstd{2.41}{0.39}
      & \meanstd{0.03}{0.00}
      & \meanstd{0.65}{0.06}
      & \meanstd{100.00}{0.00}
      & \meanstd{-0.04}{0.01}
      & \meanstd{0.29}{0.01}
      & \meanstd{8.34}{0.10} \\
    \baseline{Best-of-N}
      & \meanstd{1.77}{0.39}
      & \meanstd{2.06}{0.01}
      & \meanstd{5.24}{0.26}
      & \meanstd{0.37}{0.01}
      & \meanstd{0.89}{0.03}
      & \meanstd{100.00}{0.00}
      & \meanstd{0.77}{0.01}
      & \meanstd{0.90}{0.00}
      & \meanstd{207.86}{0.22} \\
    \baseline{TDS}
      & \meanstd{6.61}{1.98}
      & \meanstd{1.05}{0.17}
      & \meanstd{5.82}{0.78}
      & \meanstd{0.24}{0.09}
      & \meanstd{0.89}{0.04}
      & \meanstd{53.12}{2.01}
      & \meanstd{0.57}{0.19}
      & \meanstd{0.66}{0.13}
      & \meanstd{209.51}{0.40} \\
    \baseline{SMC}
      & \meanstd{7.40}{3.94}
      & \meanstd{1.29}{0.38}
      & \meanstd{5.56}{0.37}
      & \meanstd{0.16}{0.03}
      & \meanstd{0.85}{0.07}
      & \meanstd{52.76}{0.70}
      & \meanstd{0.55}{0.04}
      & \meanstd{0.73}{0.04}
      & \meanstd{89.10}{0.24} \\
    \baseline{FK-Steering}
      & \meanstd{7.71}{0.53}
      & \meanstd{2.03}{0.05}
      & \meanstd{5.88}{0.10}
      & \meanstd{0.33}{0.01}
      & \meanstd{0.94}{0.01}
      & \meanstd{100.00}{0.00}
      & \meanstd{0.70}{0.02}
      & \meanstd{0.90}{0.00}
      & \meanstd{110.64}{0.39} \\
    \baseline{SVDD}
      & \secondresult{\meanstd{49.84}{1.33}}
      & \secondresult{\meanstd{4.63}{0.12}}
      & \secondresult{\meanstd{7.26}{0.04}}
      & \secondresult{\meanstd{0.82}{0.00}}
      & \secondresult{\meanstd{0.98}{0.00}}
      & \meanstd{100.00}{0.00}
      & \meanstd{0.76}{0.00}
      & \meanstd{0.93}{0.00}
      & \meanstd{203.46}{0.42} \\
    \dimosMinorRule
    \rowcolor{dimosOurs!70}
    \textbf{\methodname\ (Ours)}
      & \firstresult{\meanstd{92.03}{0.46}}
      & \firstresult{\meanstd{5.85}{0.03}}
      & \firstresult{\meanstd{7.40}{0.13}}
      & \firstresult{\meanstd{0.98}{0.00}}
      & \firstresult{\meanstd{0.99}{0.00}}
      & \meanstd{100.00}{0.00}
      & \meanstd{0.93}{0.01}
      & \meanstd{0.94}{0.00}
      & \meanstd{195.50}{0.76} \\
    \dimosMajorRule
    \rowcolor{dimosTask!70}
    \multicolumn{10}{c}{\color{dimosNavy}\textit{$\lambda=0.75$: Activity-leaning}} \\
    \baseline{Pretrained}
      & \meanstd{0.00}{0.00}
      & \meanstd{0.18}{0.01}
      & \meanstd{2.41}{0.39}
      & \meanstd{0.03}{0.00}
      & \meanstd{0.65}{0.06}
      & \meanstd{100.00}{0.00}
      & \meanstd{-0.04}{0.01}
      & \meanstd{0.29}{0.01}
      & \meanstd{8.28}{0.01} \\
    \baseline{Best-of-N}
      & \meanstd{1.77}{0.39}
      & \meanstd{2.38}{0.02}
      & \meanstd{5.24}{0.26}
      & \meanstd{0.25}{0.00}
      & \meanstd{0.89}{0.03}
      & \meanstd{100.00}{0.00}
      & \meanstd{0.68}{0.02}
      & \meanstd{0.85}{0.01}
      & \meanstd{202.70}{0.22} \\
    \baseline{TDS}
      & \meanstd{8.54}{4.41}
      & \meanstd{2.03}{0.76}
      & \meanstd{5.49}{0.14}
      & \meanstd{0.18}{0.07}
      & \meanstd{0.90}{0.06}
      & \meanstd{53.33}{0.70}
      & \meanstd{0.57}{0.21}
      & \meanstd{0.65}{0.11}
      & \meanstd{208.40}{0.17} \\
    \baseline{SMC}
      & \meanstd{5.52}{5.22}
      & \meanstd{1.13}{0.28}
      & \meanstd{5.42}{0.35}
      & \meanstd{0.13}{0.02}
      & \meanstd{0.87}{0.09}
      & \meanstd{52.50}{0.34}
      & \meanstd{0.39}{0.14}
      & \meanstd{0.61}{0.14}
      & \meanstd{88.47}{0.32} \\
    \baseline{FK-Steering}
      & \meanstd{6.61}{0.32}
      & \meanstd{2.10}{0.09}
      & \meanstd{5.94}{0.17}
      & \meanstd{0.20}{0.01}
      & \meanstd{0.94}{0.02}
      & \meanstd{100.00}{0.00}
      & \meanstd{0.62}{0.03}
      & \meanstd{0.85}{0.02}
      & \meanstd{110.55}{0.38} \\
    \baseline{SVDD}
      & \secondresult{\meanstd{38.70}{1.47}}
      & \secondresult{\meanstd{4.98}{0.02}}
      & \secondresult{\meanstd{7.17}{0.05}}
      & \secondresult{\meanstd{0.62}{0.03}}
      & \secondresult{\meanstd{0.98}{0.00}}
      & \meanstd{100.00}{0.00}
      & \meanstd{0.77}{0.00}
      & \meanstd{0.94}{0.00}
      & \meanstd{203.41}{0.40} \\
    \dimosMinorRule
    \rowcolor{dimosOurs!70}
    \textbf{\methodname\ (Ours)}
      & \firstresult{\meanstd{91.77}{0.52}}
      & \firstresult{\meanstd{5.89}{0.00}}
      & \firstresult{\meanstd{7.60}{0.11}}
      & \firstresult{\meanstd{0.97}{0.00}}
      & \firstresult{\meanstd{0.99}{0.00}}
      & \meanstd{100.00}{0.00}
      & \meanstd{0.95}{0.00}
      & \meanstd{0.94}{0.00}
      & \meanstd{195.48}{0.72} \\
    \dimosOuterRule
  \end{tabular}%
  }
\end{table*}

\begin{table*}[t]
  \centering
  \caption{\textbf{Protein design guided by a continuous property score.} The same \emph{$\Delta\Delta G$ $+$ SeqRec} task and the same seven configurations as Tab.~\ref{tab:protein-main}, with the gated reward replaced by $R_\lambda=\lambda z_{\Delta\Delta G}+(1-\lambda)z_{\text{SeqRec}}$ on frozen-prior statistics. No acceptance threshold enters the reward, and nothing was re-tuned. Joint SR is scored against the unchanged $\Delta\Delta G>0$ and SeqRec $>0.55$. $\lambda$ weights $\Delta\Delta G$ and is part of the reward, not a sampler setting. Because $z$ is in units of the prior's own standard deviation, $\lambda=0.5$ means equal weight in prior spread, not equal importance. Blocks are different objective functions and are not comparable to each other. Guardrails are reported without ranking. $N=128$ per backbone over 12 backbones, mean $\pm$ s.d. over three seeds.}
  \label{tab:soft-objective-protein}
  \setlength{\tabcolsep}{3.6pt}
  \renewcommand{\arraystretch}{1.13}
  \resizebox{\textwidth}{!}{%
  \begin{tabular}{lIcIccIccJcIcIcIc}
    \dimosOuterRule
    \multirow{2}{*}{\textbf{Method}}
      & \multirow{2}{*}{\textbf{Joint SR (\%) $\uparrow$}}
      & \multicolumn{2}{cI}{\textbf{$\Delta\Delta G$ $\uparrow$}}
      & \multicolumn{2}{cJ}{\textbf{SeqRec $\uparrow$}}
      & \multirow{2}{*}{\textbf{Uniq. (\%)}}
      & \multirow{2}{*}{\textbf{scRMSD (\%)}}
      & \multirow{2}{*}{\textbf{pLDDT (\%)}}
      & \multirow{2}{*}{\textbf{Time (ms)}} \\
      & & \textbf{Median} & \textbf{Top-10} & \textbf{Median} & \textbf{Top-10} & & & & \\
    \dimosMajorRule
    \rowcolor{dimosTask!70}
    \multicolumn{10}{c}{\color{dimosNavy}\textit{$\lambda=0.25$: SeqRec-leaning}} \\
    \baseline{Pretrained}
      & \meanstd{4.04}{0.30}
      & \meanstd{-0.45}{0.00}
      & \meanstd{0.49}{0.03}
      & \meanstd{0.44}{0.00}
      & \meanstd{0.48}{0.00}
      & \meanstd{99.67}{0.05}
      & \meanstd{93.55}{0.11}
      & \meanstd{87.98}{0.11}
      & \meanstd{17.36}{0.04} \\
    \baseline{Best-of-N}
      & \meanstd{15.30}{0.19}
      & \meanstd{0.03}{0.01}
      & \meanstd{0.80}{0.02}
      & \meanstd{0.50}{0.00}
      & \meanstd{0.52}{0.00}
      & \meanstd{99.15}{0.16}
      & \meanstd{93.36}{0.16}
      & \meanstd{86.46}{0.16}
      & \meanstd{160.54}{0.12} \\
    \baseline{TDS}
      & \meanstd{3.84}{0.00}
      & \meanstd{-0.47}{0.04}
      & \meanstd{0.61}{0.11}
      & \meanstd{0.45}{0.00}
      & \meanstd{0.47}{0.00}
      & \meanstd{21.70}{0.72}
      & \meanstd{95.25}{0.42}
      & \meanstd{89.34}{0.15}
      & \meanstd{126.66}{0.03} \\
    \baseline{SMC}
      & \meanstd{6.60}{0.43}
      & \meanstd{-0.40}{0.02}
      & \meanstd{0.42}{0.01}
      & \meanstd{0.45}{0.00}
      & \meanstd{0.48}{0.00}
      & \meanstd{23.65}{0.80}
      & \meanstd{93.47}{0.62}
      & \meanstd{86.98}{0.14}
      & \meanstd{84.37}{0.84} \\
    \baseline{FK-Steering}
      & \meanstd{4.38}{0.06}
      & \meanstd{-0.46}{0.00}
      & \meanstd{0.57}{0.02}
      & \meanstd{0.45}{0.00}
      & \meanstd{0.49}{0.00}
      & \meanstd{99.72}{0.08}
      & \meanstd{93.29}{0.19}
      & \meanstd{86.65}{0.11}
      & \meanstd{193.09}{0.01} \\
    \baseline{SVDD}
      & \secondresult{\meanstd{33.59}{0.33}}
      & \secondresult{\meanstd{0.32}{0.01}}
      & \secondresult{\meanstd{0.98}{0.01}}
      & \secondresult{\meanstd{0.54}{0.00}}
      & \secondresult{\meanstd{0.57}{0.00}}
      & \meanstd{97.29}{0.22}
      & \meanstd{95.51}{0.18}
      & \meanstd{84.24}{0.05}
      & \meanstd{167.18}{0.10} \\
    \dimosMinorRule
    \rowcolor{dimosOurs!70}
    \textbf{\methodname\ (Ours)}
      & \firstresult{\meanstd{57.60}{0.27}}
      & \firstresult{\meanstd{0.42}{0.00}}
      & \firstresult{\meanstd{1.16}{0.02}}
      & \firstresult{\meanstd{0.61}{0.00}}
      & \firstresult{\meanstd{0.66}{0.00}}
      & \meanstd{100.00}{0.00}
      & \meanstd{92.56}{0.03}
      & \meanstd{83.92}{0.05}
      & \meanstd{158.24}{0.02} \\
    \dimosMajorRule
    \rowcolor{dimosTask!70}
    \multicolumn{10}{c}{\color{dimosNavy}\textit{$\lambda=0.50$: equal weight}} \\
    \baseline{Pretrained}
      & \meanstd{4.04}{0.30}
      & \meanstd{-0.45}{0.00}
      & \meanstd{0.58}{0.02}
      & \meanstd{0.44}{0.00}
      & \meanstd{0.47}{0.00}
      & \meanstd{99.67}{0.05}
      & \meanstd{93.55}{0.11}
      & \meanstd{87.98}{0.11}
      & \meanstd{17.50}{0.02} \\
    \baseline{Best-of-N}
      & \meanstd{15.45}{0.11}
      & \meanstd{0.14}{0.01}
      & \meanstd{0.92}{0.01}
      & \meanstd{0.48}{0.00}
      & \meanstd{0.50}{0.00}
      & \meanstd{99.35}{0.16}
      & \meanstd{93.45}{0.06}
      & \meanstd{86.02}{0.11}
      & \meanstd{160.49}{0.15} \\
    \baseline{TDS}
      & \meanstd{4.36}{0.98}
      & \meanstd{-0.46}{0.06}
      & \meanstd{0.72}{0.06}
      & \meanstd{0.44}{0.00}
      & \meanstd{0.46}{0.00}
      & \meanstd{22.24}{0.20}
      & \meanstd{94.12}{0.35}
      & \meanstd{88.00}{0.55}
      & \meanstd{125.66}{0.16} \\
    \baseline{SMC}
      & \meanstd{5.95}{0.34}
      & \meanstd{-0.55}{0.06}
      & \meanstd{0.34}{0.03}
      & \meanstd{0.44}{0.01}
      & \meanstd{0.48}{0.00}
      & \meanstd{23.39}{0.93}
      & \meanstd{92.62}{0.85}
      & \meanstd{86.48}{0.46}
      & \meanstd{83.71}{0.59} \\
    \baseline{FK-Steering}
      & \meanstd{4.36}{0.16}
      & \meanstd{-0.43}{0.00}
      & \meanstd{0.63}{0.01}
      & \meanstd{0.45}{0.00}
      & \meanstd{0.48}{0.00}
      & \meanstd{99.39}{0.03}
      & \meanstd{92.90}{0.16}
      & \meanstd{87.22}{0.08}
      & \meanstd{193.28}{0.04} \\
    \baseline{SVDD}
      & \secondresult{\meanstd{31.58}{0.11}}
      & \secondresult{\meanstd{0.46}{0.00}}
      & \secondresult{\meanstd{1.15}{0.00}}
      & \secondresult{\meanstd{0.53}{0.00}}
      & \secondresult{\meanstd{0.55}{0.00}}
      & \meanstd{97.46}{0.05}
      & \meanstd{94.23}{0.17}
      & \meanstd{84.57}{0.05}
      & \meanstd{166.98}{0.10} \\
    \dimosMinorRule
    \rowcolor{dimosOurs!70}
    \textbf{\methodname\ (Ours)}
      & \firstresult{\meanstd{46.05}{0.41}}
      & \firstresult{\meanstd{0.58}{0.01}}
      & \firstresult{\meanstd{1.28}{0.00}}
      & \firstresult{\meanstd{0.57}{0.00}}
      & \firstresult{\meanstd{0.61}{0.00}}
      & \meanstd{99.93}{0.05}
      & \meanstd{93.29}{0.05}
      & \meanstd{84.74}{0.08}
      & \meanstd{162.25}{5.94} \\
    \dimosMajorRule
    \rowcolor{dimosTask!70}
    \multicolumn{10}{c}{\color{dimosNavy}\textit{$\lambda=0.75$: $\Delta\Delta G$-leaning}} \\
    \baseline{Pretrained}
      & \meanstd{4.04}{0.30}
      & \meanstd{-0.45}{0.00}
      & \meanstd{0.64}{0.02}
      & \meanstd{0.44}{0.00}
      & \meanstd{0.47}{0.00}
      & \meanstd{99.67}{0.05}
      & \meanstd{93.55}{0.11}
      & \meanstd{87.98}{0.11}
      & \meanstd{17.28}{0.06} \\
    \baseline{Best-of-N}
      & \meanstd{13.65}{0.08}
      & \meanstd{0.24}{0.00}
      & \meanstd{1.01}{0.00}
      & \meanstd{0.46}{0.00}
      & \meanstd{0.49}{0.00}
      & \meanstd{99.13}{0.13}
      & \meanstd{93.88}{0.18}
      & \meanstd{85.83}{0.16}
      & \meanstd{159.62}{0.13} \\
    \baseline{TDS}
      & \meanstd{5.88}{0.30}
      & \meanstd{-0.34}{0.01}
      & \meanstd{0.54}{0.03}
      & \meanstd{0.44}{0.00}
      & \meanstd{0.47}{0.00}
      & \meanstd{23.81}{0.86}
      & \meanstd{91.51}{0.39}
      & \meanstd{88.85}{0.06}
      & \meanstd{119.42}{0.17} \\
    \baseline{SMC}
      & \meanstd{2.99}{0.51}
      & \meanstd{-0.56}{0.01}
      & \meanstd{0.44}{0.04}
      & \meanstd{0.44}{0.00}
      & \meanstd{0.47}{0.01}
      & \meanstd{24.76}{1.25}
      & \meanstd{93.90}{0.13}
      & \meanstd{87.65}{0.34}
      & \meanstd{80.04}{0.30} \\
    \baseline{FK-Steering}
      & \meanstd{5.92}{0.00}
      & \meanstd{-0.36}{0.00}
      & \meanstd{0.66}{0.01}
      & \meanstd{0.45}{0.00}
      & \meanstd{0.47}{0.00}
      & \meanstd{99.22}{0.11}
      & \meanstd{93.29}{0.23}
      & \meanstd{87.54}{0.20}
      & \meanstd{192.22}{0.01} \\
    \baseline{SVDD}
      & \secondresult{\meanstd{16.91}{0.16}}
      & \secondresult{\meanstd{0.57}{0.01}}
      & \secondresult{\meanstd{1.31}{0.01}}
      & \secondresult{\meanstd{0.49}{0.00}}
      & \secondresult{\meanstd{0.52}{0.00}}
      & \meanstd{98.03}{0.20}
      & \meanstd{95.59}{0.21}
      & \meanstd{85.05}{0.17}
      & \meanstd{166.82}{0.10} \\
    \dimosMinorRule
    \rowcolor{dimosOurs!70}
    \textbf{\methodname\ (Ours)}
      & \firstresult{\meanstd{19.51}{0.27}}
      & \firstresult{\meanstd{0.76}{0.01}}
      & \firstresult{\meanstd{1.40}{0.01}}
      & \firstresult{\meanstd{0.50}{0.00}}
      & \firstresult{\meanstd{0.54}{0.00}}
      & \meanstd{100.00}{0.00}
      & \meanstd{93.90}{0.12}
      & \meanstd{85.26}{0.03}
      & \meanstd{158.06}{0.07} \\
    \dimosOuterRule
  \end{tabular}%
  }
\end{table*}

\end{document}